\pgfplotsset{compat=1.18}
\newtheorem{proposition}{Proposition}
\title{\fontsize{15}{16}\selectfont Uncertainty of Thoughts: Uncertainty-Aware Planning Enhances Information Seeking in Large Language Models}
\author{
  Zhiyuan Hu$^{1}$\thanks{Corresponding to: Zhiyuan Hu, \href{zhiyuan_hu@u.nus.edu}{zhiyuan\_hu@u.nus.edu}} \quad Chumin Liu$^{2}$ \quad Xidong Feng$^{3}$ \quad Yilun Zhao$^{4}$ \\
  \quad \textbf{See-Kiong Ng}$^{1}$ \quad \textbf{Anh Tuan Luu}$^{2}$ \quad \textbf{Junxian He}$^{5}$ \quad \textbf{Pang Wei Koh}$^{6}$ \quad \textbf{Bryan Hooi}$^{1}$
  \\
  $^1$ National University of Singapore \quad
  $^2$ Nanyang Technological University \\
  $^3$ University College London \quad
  $^4$ Yale University \\
  $^5$ The Hong Kong University of Science and Technology \quad
  $^6$ University of Washington
}
\begin{document}

\maketitle

\begin{abstract}
In the face of uncertainty, the ability to \emph{seek information} is of fundamental importance. In many practical applications, such as medical diagnosis and troubleshooting, the information needed to solve the task is not initially given, and has to be actively sought by asking follow-up questions (for example, a doctor asking a patient for more details about their symptoms). In this work, we introduce Uncertainty of Thoughts (UoT), an algorithm to augment large language models with the ability to actively seek information by asking effective questions. UoT combines 1)  an \emph{uncertainty-aware simulation approach} which enables the model to simulate possible future scenarios and how likely they are to occur, 2) \emph{uncertainty-based rewards} motivated by information gain which incentivizes the model to seek information, and 3) a \emph{reward propagation scheme} to select the optimal question to ask in a way that maximizes the expected reward. In experiments on medical diagnosis, troubleshooting and the `20 Questions' game, UoT achieves an average performance improvement of 38.1\% in the rate of successful task completion across multiple LLMs compared with direct prompting, and also improves efficiency (i.e., the number of questions needed to complete the task). Our code are released\footnote{\href{https://github.com/zhiyuanhubj/UoT}{https://github.com/zhiyuanhubj/UoT}}. 

% \zhiyuan{just update the results}
\end{abstract}

\section{Introduction}

\begin{figure}[h]
    \begin{minipage}{0.5\textwidth}
        As the capabilities of large language models (LLMs) grow, they are being increasingly deployed in challenging real-world settings involving uncertainty and ambiguity. In particular, recent work aims to develop LLM agents or assistants \cite{xi2023rise,park2023generative} that effectively complete tasks in interactive environments, leading to a growing need for LLMs that can \emph{actively seek the information they need} to solve a task by asking questions in conversational settings. For example, in medical diagnosis, patients often do not initially report their symptoms in full detail. In such situations, a doctor's ability to ask effective questions is crucial, as a successful diagnosis often depends on revealing important details that the patient did not initially provide (Figure~\ref{fig:explanation example}). 
    \end{minipage}%
    \hfill
    \begin{minipage}{0.465\textwidth}
        \centering
        \includegraphics[width=\linewidth]{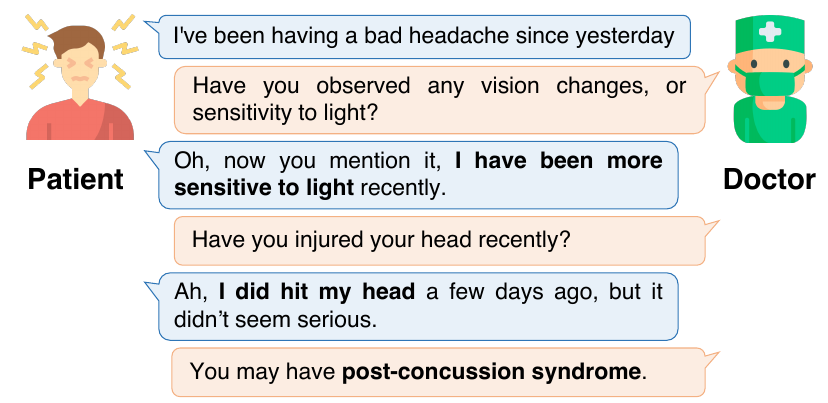}
        % \vspace{-4mm}
        \caption{The importance of information seeking in medical diagnosis. The patient initially only complains of a headache, but by asking the right questions, the doctor uncovers the critical information needed for a correct diagnosis.}
        \label{fig:explanation example}
    \end{minipage}
\end{figure}

% \vspace{10mm}

Recent techniques aim to improve LLMs' reasoning or planning abilities based on the given information rather than enabling LLMs to seek information efficiently. For example, Chain-of-Thought (CoT) \citep{wei2022chain} and Tree-of-Thoughts (ToT) \citep{yao2023tree} allow LLMs to express intermediate `thoughts' and reason over them. Unlike these methods, our focus is on enabling the LLM to ask questions effectively by explicitly guiding the model toward reducing uncertainty, which these do not consider. Thus, they lack effective signals for questions that better reduce uncertainty by revealing critical information.

% They lack signals for questions that better reduce uncertainty by revealing critical information.

To enhance LLMs in actively seeking information, we introduce Uncertainty of Thoughts (UoT), a plug-and-play approach that improves LLMs' abilities to ask useful questions by modeling their own uncertainty. UoT is a principled approach relying on \emph{uncertainty-based rewards motivated by information gain}, which incentivizes a model to seek information in a way that maximally reduces the amount of information it does not know. To utilize these rewards, we develop an \emph{uncertainty-aware simulation framework}, enabling the model to simulate possible future scenarios along with how likely they are to occur. Given these scenarios, we utilize a \emph{reward propagation scheme} to select the optimal question to ask in a way that maximizes the expected reward. 

Additionally, most standard benchmarks for LLMs, particularly in question answering, assume that all necessary information to solve a task is provided at the outset, and thus do not evaluate the model's active information-seeking capabilities. To close this gap, we first introduce a benchmark comprising 5 datasets\footnote{We also incorporate the efforts of prior datasets \cite{srivastava2022beyond, xu2019end, liu2022meddg, raghu-etal-2021-end}, through further work and refinement to construct this benchmark. Details are introduced in section~\ref{experiments} Experiments and Appendix~\ref{Scenarios Settings and Datasets}.} on 3 tasks: 20 Questions, a simplified medical diagnosis task, and a basic troubleshooting task. 
These tasks are designed to measure the model's ability to ask questions effectively to gather the information they need. For example, the 20 Questions game, also studied by Noever et al.\cite{noever2023chatbots}, requires the model to ask `yes' or `no' questions to determine an unknown object or entity. This scenario serves as a clear and easily analyzed test case, isolating the model's ability to recognize its own uncertainty, and to ask questions that guide it to the correct answer.

Our work is a step toward LLMs that can effectively operate in settings with high uncertainty and ambiguity, beyond conventional QA settings where all the information needed to solve the task is provided to the model at the outset. To the best of our knowledge, UoT is the first approach for enabling LLMs to ask effective questions by explicitly modeling and seeking to reduce their uncertainty. Our key contributions are as follows:

% \vspace{-1mm}
\begin{enumerate}

\item We introduce Uncertainty of Thoughts (UoT), a plug-and-play approach enabling LLMs to explicitly model and seek to reduce their uncertainty. UoT utilizes a principled approach based on an uncertainty-aware framework for simulating possible futures, rewards motivated by information gain, and a reward propagation scheme to select the optimal question to ask.

\item We introduce a benchmark of 3 tasks and 5 datasets, designed to evaluate the ability of LLMs to seek the information they need by asking questions.
\item Experiments show that UoT improves the success rate of multiple LLMs by 38.1\% on average compared with direct prompting, achieving top performance on both task success and efficiency. Our benchmark and code are publicly available. 
\end{enumerate}

% \vspace{-2mm}
\section{Methodology}
% \vspace{-2mm}
\subsection{Problem Formulation}

The problem setting involves two roles: the Questioner and the Answerer, performed by the LLM and a human, respectively. The goal of the Questioner is to deduce an unknown piece of information. We formulate this using a \emph{possibility space} $\Omega$, which is the set of all possible options, of which a single element $\omega \in \Omega$, is the \emph{true option} in each given scenario\footnote{Under the measure-theoretic formulation of probability, the \emph{sample point} $\omega$ is an element of the \emph{sample space} $\Omega$, and all random variables are defined to be functions of $\omega$. While we conform to this formulation, we try to avoid unnecessary measure-theoretic background for ease of understanding; hence, it is sufficient for readers to understand $\omega$ as the `true option' in each scenario.}. For example, in a medical diagnosis setting, $\Omega$ is the set of all possible diseases relevant in the context, e.g.,  $\Omega = \{\text{Bronchitis}, \text{Flu}, \dots, \text{Hypertension}\}$, and for each patient, $\omega$ is the actual disease of the patient.

The interaction between the Questioner and the Answerer occurs over multiple turns. For instance, the Questioner may ask, ``Do you have a fever?", to which the Answerer responds, ``Yes, I've had a high fever for the past two days." The Questioner then asks another question such as ``Have you vomited?" This exchange continues either until the Questioner correctly determines the final answer, or the conversation reaches a maximum number of turns. At this point, the interaction ends, and the Questioner is \emph{successful} if it has correctly determined the true option $\omega$.

Most of the description of our approach focuses on the \emph{closed set} scenario, in which we assume that the Questioner starts with knowledge of the possibility space $\Omega$, e.g., the set of all possible diseases in medical diagnosis. In our extension section \ref{sec:extension}, we adapt our approach to the \emph{open set} scenario, in which this knowledge is absent. Moreover, as the questioning progresses, we use an LLM to gradually refine this set of possibilities to those that are consistent with the current answers given so far by the Answerer. Define the \emph{current possibility set} $\Omega_i$ as the subset of $\Omega$ that is consistent with all answers given by the Answerer before the start of the $i$th interaction step. 

As we discuss more later, we focus on applications where answers can be grouped into a small number of semantically distinct categories (in our case, affirmative and negative responses), as this allows us to compute meaningful uncertainty metrics in a simpler way. Conceptually, our framework can straightforwardly be extended to allow for a wider selection of answers.

% \vspace{-2mm}
\subsection{Uncertainty of Thoughts: Overview}
% \vspace{-2mm}
As Figure~\ref{fig:method details} shows, to effectively reduce uncertainty, our UoT method first \textbf{generates multiple questions} as candidates to ask, and \textbf{simulates possible futures} for each one in the form of a tree structure. Next, \textbf{uncertainty-based rewards}, motivated by information gain, are used to assess the questions within the simulation. Finally, a \textbf{reward propagation scheme} is used to compute the expected reward from asking each candidate question, allowing us to select the one with highest expected reward, to ask the Answerer.

% \vspace{-2mm}
\begin{figure*}
\label{UoT overview}
    \centering
    \includegraphics[width=1.01\linewidth]{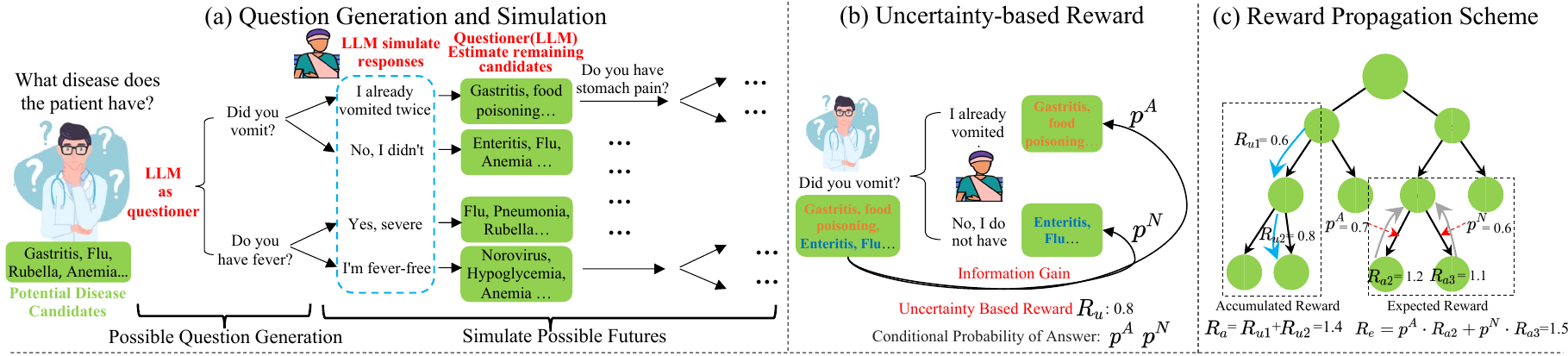}
    % \vspace{-3mm}
    \caption{UoT Overview: UoT includes three components: (a) Question Generation and Simulation, where an LLM proposes candidate questions and simulates future scenarios; (b) Uncertainty-based Rewards, measuring the uncertainty reduction from answers to a question, and (c) Reward Propagation computing accumulated rewards $R_a$ over past questions, and expected rewards $R_e$ capturing expected future gains. The process ends by choosing questions with the highest expected reward.}
    \label{fig:method details}
    % \vspace{-4mm}
\end{figure*}

\subsection{Question Generation and Simulation}
\label{Question Generation and Simulation}
% \vspace{-2mm}
UoT starts by using an LLM to generate several candidate questions, then simulates future scenarios for each one. This simulation process allows us to measure how much information we can expect to gain in the next few steps from each question, and thus to choose the most suitable question. 

\vspace{1mm}
\textbf{Question Generation}
\quad
Recall that our setting involves sequential interactions between a Questioner (e.g., a chatbot) and an Answerer (e.g., a human patient). During the $i$th interaction step, the Questioner generates candidate questions, then selects one of these to ask, denoted as $q_i$. 

To generate candidate questions to ask, UoT uses two inputs: (1) the \emph{history of past interactions} $h_i = \{q_1, a_1, q_2, a_2, \dots, q_{i-1}, a_{i-1}\}$, comprising the sequence of past questions and answers; and (2) the \emph{current possibility set} $\Omega_i$. These two inputs are combined to form a prompt that includes instructions explaining the nature of the task (e.g., how the 20 Questions game works), provides the current history $h_i$ and the current possibility set $\Omega_i$, and asks an LLM to generate $m$ candidate next questions, conditioned on the previous context. This prompt, denoted as $\mathsf{Prompt_{gen}}(h_i, \Omega_i)$, is fed to our generator $\mathsf{LLM_{gen}}$, which then generates $m$ candidate questions, denoted $q_i^1, q_i^2, \dots, q_i^m$:
\begin{align}
q_i^1, q_i^2, \dots, q_i^m = \mathsf{LLM_{gen}} (\mathsf{Prompt_{gen}}(h_i, \Omega_i))
\label{question generation}
\end{align}

\vspace{1mm}
\textbf{Multistep Simulation}
\quad
As shown in Figure \ref{fig:method details} (a), the Question Generation stage generates candidate questions such as $q_i^1=$ ``Did you vomit?" Next, during Simulation stage, for each such generated candidate question, we simulate possible futures for a few steps, forming a tree of possibilities. This process enables us to compute rewards for each question, helping us to decide which question to ask.

Each node of the tree can be one of two types: \emph{Answerer Nodes} where it is the Answerer's turn to answer a question, and \emph{Questioner Nodes} where it is the Questioner's turn to ask a question. At the root, a question has just been asked (e.g., $q_i^1$), so the root is an Answerer Node. Next, we explain how to construct tree by recursively expanding (or `branching') each node to construct its children, i.e., starting from the root, then proceeding to its children, and so on.
\begin{itemize}[leftmargin=1cm]
\item At each \textbf{Answerer Node}, a question has just been asked. Next, we need to further `branch' the tree based on the possible answers to the current question. Rather than allowing completely open-ended answers, we instead focus on affirmative and negative responses\footnote{As shown Figure~\ref{fig:method details} (a), for question `Did you vomit?', possible affirmative responses include `yes' or `I already vomited twice', while negative responses could be `no' or `I don't have'.}, as this allows us to compute meaningful uncertainty metrics, as we discuss later. Hence, we branch the node into two children, corresponding to affirmative and negative answers.

\item At each \textbf{Questioner Node}, we prompt an LLM to generate $m$ questions using the current history and possibility set, in the same way as in the Question Generation step. Note that while the generation procedure is similar, the purpose is different: the Question Generation step generates \emph{candidate} questions to select from, while here we are generating \emph{simulated} questions to form a tree for the purpose of evaluating the current question. The resulting $m$ generated questions are added to the tree as children of the current node.
\end{itemize}

In this way, we recursively generate tree nodes, stopping at a fixed number of levels (i.e., depth). 

% we recursively generate nodes of the tree, stopping when we reach a fixed number of levels (i.e., depth). 

While generating this tree, we also recursively compute the \emph{current possibility set} $\Omega_v$ at each node $v$. Specifically, let $h_v$ be the current conversation history up to node $v$, combining both the actual conversation history $h_i$ and the simulated conversation up to node $v$. Then the current possibility set at this node, denoted $\Omega_v$, is the subset of the possibility space consistent with $h_v$. At the root, the current possibility set is only limited by the actual conversation history, i.e., $\Omega_i$. Then, as we proceed over the simulated tree, note that the current possibility set only changes at Answerer nodes, when an answer is added to the current history. Hence, at each Answerer node $v$, we prompt a new LLM (an `Answerer Simulator' $\mathsf{LLM_{ans}}$), to determine the further subset $\Omega_v^{A} \subseteq \Omega_v$ for which the answer to the current question is affirmative, and the corresponding $\Omega_v^{N} = \Omega_v \setminus \Omega_v^{A}$ for which the answer is negative.\footnote{In practice, allowing overlap between $\Omega_v^A$ and $\Omega_v^N$ may be more realistic. However, in this work, we consider only the simplified scenario where they are disjoint.}
This allows us to recursively compute the possibility sets of the children of $v$ (which themselves correspond to the affirmative and negative answers). 
\begin{align}
\label{generate question}
\Omega_v^{A}, \Omega_v^{N} = \mathsf{LLM_{ans}} (\mathsf{Prompt_{ans}}(h_v, \Omega_v))
\end{align}
In this way, we can recursively compute the possibility set on each node of the tree.

\subsection{Uncertainty-Based Reward Calculation} \label{sec:reward}
% \vspace{-1mm}
To develop suitable information-seeking approaches, a critical question is \emph{how to evaluate the effectiveness of a question, i.e., its contribution to reducing uncertainty}. To address this, we turn to information theory, specifically the concept of \emph{information gain}, which measures the amount by which uncertainty decreases after a particular observation. To reward information-seeking behavior, we assign rewards to questions based on how much they reduce the model's uncertainty about the unknown random variable. These reward signals are used by our UoT framework to determine which question to select, to maximize the reduction of uncertainty. 

\vspace{2mm}
\textbf{Entropy.} 
Entropy and information gain are well-known concepts in information theory \cite{shannon1948mathematical}. In our work, we use these concepts to measure how much information is gained (or equivalently, how much uncertainty is reduced) by asking a question, to formulate our rewards. Entropy measures the level of uncertainty in a random variable: higher entropy indicates greater uncertainty. The entropy of a discrete random variable $X$ taking values $x_1, ..., x_n$ is:
\begin{align}
     H(X) = -\sum\nolimits_{i=1}^n p(x_i)\log p(x_i)
\label{eq:entropy}
\end{align}
Since our goal is to reduce the uncertainty in the unknown $\omega \in \Omega$, we use entropy to measure this uncertainty. 
Formally, let $\Omega = \{ \omega_1, \cdots, \omega_n \}$, and we define an additional set of arbitrary real numbers $\mathcal{X} = \{ x_1, \cdots, x_n \} \subseteq \mathbb{R}$ which we will associate with each of these possibilities. 
Define a random variable $X:\Omega \rightarrow \mathcal{X}$ such that $X(\omega_i)=x_i$. Intuitively, $X$ is a discrete random variable that takes the value $x_i$ if the $i$th possibility is true, i.e., if $\omega = \omega_i$. 
$X$ serves to capture our uncertainty about $\omega$, since observing $X$ is equivalent to observing the true option $\omega$. As a simple example, suppose our possibility space is $\Omega = \{\omega_1, \omega_2, \omega_3\}$; we accompany these with real numbers $x_1, x_2, x_3$, and have a distribution for our random variable $X$ reflecting prior beliefs over these possibilities: e.g., $p(x_1)=0.2,\ p(x_2)=0.3,\ p(x_3)=0.5$. Conceptually, our framework allows for any prior probability distribution over the possibilities (i.e., $p(x_i)$), but in our experiments, we assume a uniform distribution over them due to the lack of an informative prior.

Before asking any questions, our uncertainty about the unknown $\omega$ is given by $H(X)$, as in Eq. \eqref{eq:entropy}. At any node $v$ of the trees described in the previous section, recall that we have a conversation history $h_v$ which contains some answers given by the Answerer. This history limits the current possibility set to those in $\Omega_v \subseteq \Omega$, thereby reducing our uncertainty. We model this using the standard notion of \emph{conditional probability on an event}: since $\Omega_v \subseteq \Omega$, thus $\Omega_v$ is an event which we can condition on:
\vspace{1mm}
\begin{align}
    p(x_i | \Omega_v) = p(x_i) / p(\Omega_v) \ \ \ \forall\ i \text{ such that } \omega_i \in \Omega_v
\end{align} 
\vspace{1mm}
where $p(\Omega_v)$ is the sum of probabilities of the elements in $\Omega_v$. To illustrate, we continue from the earlier example, where $p(x_1)=0.2,\ p(x_2)=0.3,\ p(x_3)=0.5$. If the conversation history $h_v$ at node $v$ is only consistent with $x_1$ and $x_2$, i.e., $\Omega_v = \{\omega_1, \omega_2\}$, we can adjust probability distribution by conditioning: e.g., the adjusted probability of $x_1$ is $p(x_1) / p(\Omega_v) = 0.2 / (0.2 + 0.3) = 0.4$.

Next, to quantify the uncertainty at node $v$, note that since $X$ is conditionally distributed based on $p(\cdot | \Omega_v)$, the entropy of this distribution is:
\begin{align}
H_v(X) := \sum\nolimits_{i: \omega_i \in \Omega_v} p(x_i | \Omega_v) \log p(x_i | \Omega_v)
% \vspace{-4mm}
\end{align}

Intuitively, $H_v(X)$ is the remaining uncertainty in $X$ at node $v$ (i.e., after observing the history $h_v$).

\vspace{2mm}
\textbf{Information Gain at a Node} \quad We now quantify the uncertainty reduction when receiving answers at an Answerer node $v$. Recall that the answer given at $v$ partitions $\Omega_v$ into two disjoint subsets: $\Omega_v = \Omega_v^A \cup \Omega_v^N$, where $\Omega_v^A$ and $\Omega_v^N$ are the subsets of possibilities resulting in affirmative and negative answers to last asked question. Given an affirmative answer, the remaining entropy becomes: %
\begin{align}
    H_v^A(X) := \sum\nolimits_{i: \omega_i \in \Omega_v^A} p(x_i | \Omega_v^A) \log p(x_i | \Omega_v^A)
    % \vspace{-2mm}
\end{align} 

We define $H_v^N(X)$ analogously for negative answers. Let $p_v^A = p(\Omega_v^A)/p(\Omega_v)$ and $p_v^N = p(\Omega_v^N)/p(\Omega_v)$ be the conditional probabilities of affirmative and negative answers at node $v$. 
To compute the expected entropy after receiving the answer at node $v$, since we have a $p_v^A$ probability of receiving an affirmative answer and $p_v^N$ of a negative answer, the expected entropy is:
\begin{align}
    p_v^A \cdot H_v^A(X) + p_v^N \cdot H_v^N(X)
    % \vspace{-4mm}
\end{align}
% \vspace{-1mm}
As such, the expected information gain at node $v$ is the difference in entropies before and after receiving the answer:
\begin{align}
    IG_v(X) := H_v(X) - p_v^A \cdot H_v^A(X) - p_v^N \cdot H_v^N(X)
    % \vspace{-4mm}
\end{align}
% \vspace{-0.5mm}
We can simplify this: as proven in Appendix \ref{appendix:proof}, the above equation reduces to:
\begin{align}
    IG_v(X) = -p_v^A \log p_v^A - p_v^N \log p_v^N
    % \vspace{-3mm}
\end{align}

% \vspace{-2mm}
This represents the expected reduction of uncertainty in $X$ when receiving an answer at node $v$. Note that it has an entropy-like expression, and is therefore nonnegative. 

% \vspace{-1mm}
\vspace{2mm}
\textbf{Reward Formulation} \quad
A natural approach would be to define the reward function $R_u(v)$ at node $v$ as the information gain $IG_v(X)$: that is, the reward from the question at node $v$ is the expected information gain $IG_v(X)$ from receiving its answer. In practice, we find that a slightly modified function $\widetilde{IG}_v(X)$ is preferable. In particular, we find that $IG_v(X)$ does not result in sufficiently sharp differences in reward over the typical ranges we encounter. Hence, we introduce an additional hyperparameter $\lambda \ge 0$ which helps to sharpen the rewards using a scaling approach. We compare other scaling methods and determine the current design is optimal in performance and their corresponding benefits. Details are in the Appendix~\ref{comparison of reward design}.
\vspace{2mm}
\begin{align}
\label{uncertainty reward eq}
% \vspace{-3mm}
R_u(v) = \widetilde{IG}_v(X) := (-p_v^A\log p_v^A - p_v^N\log p_v^N) / (1 + \lambda^{-1}|p_v^A - p_v^N|)
% \vspace{-3mm}
\end{align}
\vspace{2mm}
This definition ensures that $R_u(v)$ falls within the range $[0, 1]$, providing a normalized and consistent reward to measure uncertainty reduction. The reward function reaches its maximum when the subsets $\Omega_{v}^A$ and $\Omega_{v}^N$ have equal probability, reflecting the maximum reduction in uncertainty. It reaches its minimum when one of the subsets has zero probability, indicating no reduction in uncertainty. Appendix~\ref{reward fuction} plots the reward function curve across values of $p_v^A$ and $p_v^N$.

% \vspace{-1mm}
\subsection{Question Selection Via Reward Propagation}
% \vspace{-1mm}
Single-step rewards often fall short in dynamic settings as they only consider immediate impact, overlooking long-term effects. To overcome this, our method uses a reward propagation scheme across simulation trees by defining `accumulated rewards' that gather rewards over multiple simulation steps to reflect the effectiveness of past decisions. These accumulated rewards help compute `expected rewards', indicating the likely benefits of the questions and guide the selection of candidate questions.

% our method employs a reward propagation scheme over simulation trees. Specifically, we will define `accumulated rewards' which accumulate rewards over multiple steps of the simulations, capturing the effectiveness of past decisions. These are used to calculate 'expected rewards,' indicating the expected information from the questions. The rewards then guide the selection of candidate questions.

% \vspace{-1mm}
\vspace{2mm}
\textbf{Accumulated Reward} \quad
We first define the accumulated reward at each node $v$, which accumulates the rewards at $v$ and all its ancestors on the tree, defined recursively as:
\begin{align*}
\label{accumulative reward}
    R_a(v) := R_u(v) + \left\{
    \begin{array}{ll}
    0 & v\text{ is root} \\
    R_a(\mathsf{Parent}(v)) & \text{otherwise}
    \end{array}
    \right.
\end{align*}
Here $R_u(v)$ is the uncertainty-based reward at node $v$ defined in Eq. \eqref{uncertainty reward eq}, and $R_a(\mathsf{Parent}(v))$ is the accumulated reward of the parent of $v$. We compute these accumulated rewards by starting at the root and propagating down to the leaves. Intuitively, the accumulated reward at each leaf node represents the total reward we end up with at the end of the conversation at that node.

% \vspace{-1mm}
\vspace{2mm}
\textbf{Expected Reward} 
\quad
Next, we compute the expected reward for each node $R_e(v)$, which represents the expected total value of rewards received on expectation on a node and all its descendants on tree. 
\begin{align*}
% \vspace{-3mm}
    R_e(v) := &
    \begin{cases}
        R_a(v) & \text{if $v$ is a leaf; otherwise:}\\
        p_v^A R_e(v^A) + p_v^N R_e(v^N) & \text{if $v$ is an Answerer Node}\\
        \frac{1}{m}\sum_{w \in \mathsf{Children}(v)}^{m} R_e(w) & \text{if $v$ is a Questioner Node}\\
     \end{cases}
     % \vspace{-3mm}
\end{align*}

% \vspace{-2mm}
For the case where $v$ is an Answerer Node, recall that $p_v^A$ and $p_v^N$ are the conditional probabilities of affirmative and negative answers at node $v$, defined in section \ref{sec:reward}. $v^A$ and $v^N$ are its children, corresponding to the affirmative and negative answers. For the case where $v$ is a Questioner Node, we assign equal probability to the $m$ questions asked from this node. In this way, we propagate the expected rewards from the leaves up to the root, allowing us to compute the expected gain at the root. We compare different reward propagation schemes and find that using cumulative rewards from all paths enhances long-term decision-making benefits. See Appendix~\ref{Reward propagation Schemes} for details.

\vspace{2mm}
\textbf{Determining the Optimal Question}
\quad
Finally, to decide the question to ask, we select the question with highest expected reward (and therefore, the highest expected information gain, considering both immediate and future information gains):
\begin{equation}
\label{question selection}
% \vspace{-2mm}
q_i = \underset{n=1}{\operatorname{arg\,max}}\ R_e(q_i^n)
\end{equation}

% \vspace{-2mm}
\subsection{UoT Summary}
% \vspace{-1mm}

UoT first generates candidate questions $q_i^1, q_i^2, \dots, q_i^m$ based on the history $h_i$ and current possibility set $\Omega_i$. Then, we conduct multistep simulation to generate a tree for each candidate question $q_i^n$. Next, we compute the uncertainty-based rewards $R_u(v)$, and propagate over the trees to compute accumulated reward $R_a(v)$ and expected reward $R_e(v)$. Lastly, the optimal question $q_i^n$ with highest expected reward will be selected as $q_i$ to interact with the Answerer. 
UoT generates candidate questions $q_i^1, q_i^2, \dots, q_i^m$ based on history and the current possibility set $\Omega_i$. It simulates a tree for each question, calculates uncertainty-based rewards $R_u(v)$, and computes expected rewards $R_e(v)$. The question $q_i^n$ with the highest expected reward is chosen for interaction.

% \vspace{-1mm}
\subsection{Extensions and Discussion} 
% \vspace{-1mm}
\label{sec:extension}

\textbf{Open Set UoT.} 
Recall that in the closed set scenario, the Questioner starts with knowledge of the possibility space $\Omega$. In practice, the possibility space is often unknown, resulting in the open set setting. To adapt UoT to this case, we prompt Questioner to initialize the possibility space $\Omega$ and then reinitialize the possibility set $\Omega_i$ according to current history $h_i$. Then, the rest of UoT is unchanged. 
\textbf{The generalization in open-end answers.} The UoT framework enables LLMs to update possibilities after each interaction, including affirmative/negative or open-ended responses. Thus, it can be applied to open-ended answers scenarios.
\textbf{Pruned UoT.}
To enhance efficiency during simulation, pruning akin to Beam Search can be employed when constructing the simulation trees, which limits the number of paths to explore over the tree to a predetermined size.

\section{Experiments}
\label{experiments}
% \vspace{-2mm}

\subsection{Experimental Setup}

% \vspace{-2mm}
\paragraph{Models}
We test various LLMs to evaluate the generality of our method, including \textbf{Llama-3-70B-Instruct} \cite{llama3modelcard}, \textbf{Mistral-Large} \cite{mistrallarge}, \textbf{Gemini-1.5-Pro} \cite{reid2024gemini}, \textbf{Claude-3-Opus} \cite{claude3} and \textbf{GPT-4} \cite{openai2023gpt4}. We also validate the performance of earlier released LLMs (Refer to Appendix~\ref{Performance}) including \textbf{Llama 2-70B-Chat} \cite{touvron2023llama}, \textbf{Cohere} \cite{cohere}, \textbf{PaLM 2} \cite{anil2023palm}, \textbf{Claude 2} \cite{Claude-2} and \textbf{GPT-3.5-turbo} \cite{openai2023gpt3.5}. 

% \vspace{-1mm}
\paragraph{Baselines}
\textbf{Direct Prompting (DP)} prompts an LLM directly to generate the next response. \textbf{Planning Prompting (PP)} is motivated by Wang et al.\cite{wang2023plan}. We leverage another LLM to plan the future and, consequently, determine the question to ask. \textbf{Chain-of-Thought (CoT)} \cite{wei2022chain} improves reasoning in LLMs by detailing reasoning steps. \textbf{CoT-SC (Self-Consistency)} \cite{wang2022self} an is an ensemble method, explores multiple reasoning paths. We standardize sampling counts for fair computational cost comparison with other methods. \textbf{Reflexion} \cite{shinn2023reflexion} lets agents propose actions and self-assess to foster new ideas. \textbf{Tree-of-Thoughts (ToT)} \cite{yao2023tree} enables LLMs to make decisions by exploring and evaluating multiple reasoning paths over a tree structure. We examine ToT under two setups: \textbf{Original-ToT}, which uses the standard approach of generating and evaluating questions, and \textbf{Adapted-ToT (Ad.-ToT)}, where we integrate heuristic experience into prompt for question generation and evaluation, focusing on questions that halve the search space. We matched the tree depth to the simulation steps in our UoT method for a fair comparison. We evaluate methods and LLMs in both open set \textbf{(OS)} and closed set \textbf{(CS)} settings. In open set, models are tested without prior knowledge of outcomes; in closed set, they are given complete information about all possible outcomes. For details, see Appendix~\ref{baseline} for experimental settings and Appendix~\ref{Prompts} for prompts.

\begin{table*}[htb]

\footnotesize
\centering
% \vspace{-1mm}
\caption{Results from three different scenarios, assessing Success Rate (SR), Mean Conversation Length in Successful Cases (MSC), and Mean Conversation Length (MCL).}
% \vspace{-1mm}
\resizebox{\textwidth}{!}{
\begin{tabular}{ll|cccccc|cccccc|ccc} 

% \begin{tabular}{llccccccccccccccc} 

\toprule
    \multirow{4}{*}{\bf Model} &\multirow{4}{*}{\bf Method} & \multicolumn{6}{c}{\bf 20 Questions} & \multicolumn{6}{|c}{\bf Medical Diagnosis} & \multicolumn{3}{|c}{\bf Troubleshooting} \\
    \cmidrule(lr){3-8}\cmidrule(lr){9-14}\cmidrule(lr){15-17}
    & & \multicolumn{3}{c}{\bf Common} & \multicolumn{3}{c}{\bf Thing} & \multicolumn{3}{|c}{\bf DX} & \multicolumn{3}{c}{\bf MedDG} & \multicolumn{3}{|c}{\bf FloDial}  \\ 
    \cmidrule(lr){3-5}\cmidrule(lr){6-8}\cmidrule(lr){9-11}\cmidrule(lr){12-14}\cmidrule(lr){15-17}

    & & SR$\uparrow$ & MSC$\downarrow$ & MCL$\downarrow$ & SR$\uparrow$ & MSC$\downarrow$ & MCL$\downarrow$ & SR$\uparrow$ & MSC$\downarrow$ & MCL$\downarrow$ & SR$\uparrow$ & MSC$\downarrow$ & MCL$\downarrow$ & SR$\uparrow$ & MSC$\downarrow$ & MCL$\downarrow$ \\

\midrule
    \multirow{4}*{Llama3-70B} 
     &DP (OS)  & 34.2 & 13.9 & 17.9 & 15.5 & 14.9 & 19.2 & 26.0 & 3.6 & 4.6 & 25.7 & 3.6 & 4.6 & 11.1 & 15.4 & 19.5\\
     &UoT(OS) & \textbf{36.9} & \textbf{12.4} & \textbf{17.3} & \textbf{21.0} & \textbf{13.6} & \textbf{18.7} & \textbf{35.6} & \textbf{2.6} & \textbf{4.1} & \textbf{50.6} & \textbf{2.3} & \textbf{3.6} & \textbf{26.1} & \textbf{9.1} & \textbf{17.2} \\
     \cdashline{2-17}
     \noalign{\vskip 0.5mm}
     &DP (CS)  & 51.4 & 14.6 & 17.2 & 15.0 & 13.8 & 19.1 & 83.7 & 3.5 & 3.7 & 60.2 & 3.5 & 4.1 & 28.8 & 15.7 & 18.8\\
     &UoT (CS)  & \textbf{55.9} & \textbf{12.6} & \textbf{15.9} &\textbf{25.0} &\textbf{13.0} &\textbf{18.3} & \textbf{90.4} & \textbf{1.0} & \textbf{1.4} & \textbf{64.3} & \textbf{1.4} & \textbf{2.7} & \textbf{47.1} & \textbf{7.6} & \textbf{14.2}  \\

\midrule
     \multirow{4}*{Mistral-Large} 
     &DP(OS)  & 20.7 & \textbf{13.1} & \textbf{18.6} & 12.5 & 13.6 & 19.2 & 18.3 & 3.4 & 4.7 & 28.3 & 3.2 & 4.5 & 11.1 & 15.8 & 19.5\\
     
     &UoT(OS)  & \textbf{27.0} & 15.1 & 18.7 &\textbf{15.0} & \textbf{13.1} & \textbf{19.0} & \textbf{24.0} & \textbf{2.5} & \textbf{4.4} & \textbf{50.0} & \textbf{2.9} & \textbf{4.0} & \textbf{19.6} & \textbf{11.3} & \textbf{18.3}\\
     \cdashline{2-17}
     \noalign{\vskip 0.5mm}
     &DP (CS) & 26.1 & 13.4 & 18.3 &13.0 & \textbf{12.6} & 19.0 & 38.5 & 3.3 & 4.3 & 46.7 & 3.3 & 4.2 & 14.2 & 16.0 & 19.4\\
     &UoT (CS) &\textbf{31.5} &\textbf{9.8} & \textbf{16.8} &\textbf{18.5} & 13.2 & \textbf{18.7} & \textbf{48.1} & \textbf{2.2} & \textbf{3.6} & \textbf{60.0} & \textbf{1.9} & \textbf{3.2} & \textbf{30.1} & \textbf{10.9} & \textbf{17.3}\\

\midrule
     \multirow{4}*{Gemini-1.5-Pro} 
     &DP (OS)  &36.0 &16.8 &18.8 &17.5 & 14.4 & 19.0 & 26.9 & 3.5 & 4.6 & 23.7 & 4.0 & 4.8 & 9.15 & 15.6 & 19.6\\
     &UoT(OS)  & \textbf{39.7} & \textbf{14.6} & \textbf{17.9} &\textbf{22.0} & \textbf{13.4} & \textbf{18.5} & \textbf{39.4} & \textbf{2.4} & \textbf{4.0} & \textbf{38.6} & \textbf{2.9} & \textbf{4.2} & \textbf{19.0} & \textbf{12.1} & \textbf{18.5} \\
     \cdashline{2-17}
     \noalign{\vskip 0.5mm}
     &DP (CS)  & 47.7 & 17.0 & 18.6 & 28.5 & 15.0 & 18.6 & 69.2 & 3.2 & 3.8 & 51.4 & 3.2 & 4.1 & 30.1 & 14.0 & 18.2 \\
     &UoT (CS)   & \textbf{60.4} & \textbf{13.9} & \textbf{16.3} &\textbf{32.0} &\textbf{14.0} &\textbf{18.1} &\textbf{81.7} & \textbf{2.1} & \textbf{2.6} & \textbf{81.4} & \textbf{2.1} & \textbf{2.6} & \textbf{53.6} & \textbf{11.5} & \textbf{15.4} \\

\midrule
     \multirow{4}*{Claude-3-Opus} 
     &DP(OS)  & 45.0 & \textbf{14.2} & 17.4 &16.5 & 13.8 & 19.0 & 33.7 & 3.4 & 4.5 & 54.3 & 3.2 & 4.0 & 31.4 & 15.7 & 18.6\\
     &UoT(OS) & \textbf{63.1} & 14.4 & \textbf{16.5} &\textbf{23.5}   &\textbf{13.3}  &\textbf{18.4} & \textbf{45.9} & \textbf{2.6} & \textbf{3.9} & \textbf{61.5} & \textbf{2.3} & \textbf{3.3} & \textbf{35.9} & \textbf{11.0} & \textbf{16.8} \\
     \cdashline{2-17}
     \noalign{\vskip 0.5mm}
     &DP (CS)  & 52.3 & 13.8 & 16.8 & 33.5 & 14.1 & 18.0 & 75.0 & 3.3 & 3.7 & 73.3 & 3.3 & 3.8 & 48.4 & 16.0 & 18.1\\
     &UoT (CS)  & \textbf{66.7} & \textbf{6.9} & \textbf{11.3} & \textbf{41.5} & \textbf{13.9} & \textbf{17.5} & \textbf{81.7} & \textbf{2.2} & \textbf{2.7} & \textbf{79.3} & \textbf{2.4} & \textbf{2.9} & \textbf{56.2} & \textbf{6.2} & \textbf{12.2} \\

\midrule
    \multirow{8}*{GPT-4} 
    \quad &DP(OS)  & 48.6 & \textbf{14.0} & \textbf{17.1} & 16.5 & \textbf{12.6} & 18.8 &  44.2 & 3.5 & 4.9 & 45.7 & 4.2 & 4.6  &38.4 &13.0 &17.3\\
    \quad &CoT(OS) & 13.5 & 18.6 & 19.8 & 6.00 & 16.4 & 19.8 & 18.3 & 3.8 & 4.8 & 9.71 & 4.0 & 4.9 & 30.7 & \textbf{10.3} & 17.0 \\
    \quad &Ad.-ToT(OS)  & 45.0 & 17.8 & 19.0 & 21.0 & 15.2 & 19.0 & 45.2 & \textbf{2.4} & 3.8 & 51.4 & 2.7 & 3.8 & 35.3 & 13.3 & 17.7 \\
    \quad &UoT(OS)  & \textbf{55.3} & 15.1 & 17.4 & \textbf{28.0} & 14.9 & \textbf{18.6} & \textbf{49.1} & \textbf{2.4} & \textbf{3.7} & \textbf{67.4} & \textbf{2.5} & \textbf{3.5} & \textbf{43.5} & 12.0 & \textbf{16.8} \\
     \cdashline{2-17}
     \noalign{\vskip 0.5mm}
    
    \quad &DP (CS)  & 50.5 & 13.1 & 16.5 & 30.5 & 13.1 & \textbf{17.9} & 91.3 & 3.0 & 3.3 & 72.3 & 4.2 & 4.4  &43.7 &13.4 &17.1 \\
    \quad &PP (CS)  & 38.7 & 14.9 & 18.0 &18.0 &14.5 &19.0 & 58.6 & 2.5 & 3.5 & 62.3 & 3.8 & 4.3 &39.2 &14.2 &17.7 \\
    \quad &CoT (CS)   &20.7 &16.0 & 19.2 &10.0 &16.2 &19.6 & 33.7 & 3.7 & 4.4 & 20.0 & 3.8 & 4.3 &32.8 &10.1 &16.8 \\
    \quad &CoT-SC (CS)  & 55.1 & 14.0 & 16.7 &18.5 &14.8 &19.0 & 48.5 & 3.6 & 4.3 & 26.7 & 4.2 & 4.8 &42.5 &11.0 &16.2 \\
    \quad &Reflexion (CS)   &67.6 &12.0 &14.6 &31.5 &13.6 &18.0 & 52.5 & 3.7 & 4.3 & 30.3 & 4.0 & 4.7  &28.6 &11.5 &17.8 \\
    \quad &Original-ToT (CS)   & 28.8 & 15.5 & 18.7 &18.5 &15.1 &19.1  & 70.3 & 3.3 & 3.8 & 60.3 & 3.2 & 3.9 &40.4 &11.6 &16.6 \\
    \quad &Ad.-ToT (CS)   & 42.6 & 12.2 & 16.1 &25.0 &\textbf{13.0} &18.3 & 92.1 & \textbf{1.9} & 2.2 & 78.0 &3.0 & 3.4 &60.3 &8.2 &12.9 \\
    
    \quad &Pruned UoT (CS)   & 62.2 & \textbf{10.8} & 14.3 & 34.0 & 14.9 & 18.3 &92.1 &\textbf{1.9} &\textbf{2.1} & 83.3 & 2.7 &3.1  &63.2 &8.2 &12.5 \\
    \quad &UoT (CS)  & \textbf{71.2} & \textbf{10.8} & \textbf{13.5} & \textbf{37.5} & 14.4 & \textbf{17.9} &\textbf{97.0}&2.0&\textbf{2.1} &\textbf{88.0} & \textbf{2.6} & \textbf{2.9} &\textbf{67.3} &\textbf{7.8} &\textbf{11.8} \\  
 
\bottomrule
\end{tabular}
}
% \vspace{-3mm}
\label{result: 20q}
\end{table*}

\paragraph{Scenarios and Datasets} 
\textbf{20 Questions} is a game where the \textit{answerer} thinks of an item and the \textit{questioner} asks up to 20 yes-or-no questions to guess it. We use two datasets, Common (collected by us, refer to Appendix~\ref{Common} for more details) and Things \cite{hebart2019things}, including 111 and 1854 items separately. In this scenario, the maximal turns is set to 20. In \textbf{Medical Diagnosis}, the doctor needs to ask questions to patients about their symptoms, to determine an accurate diagnosis. We use two datasets: DX \cite{xu2019end}, with 104 doctor-patient dialogues and 5 diseases in test set, and MedDG \cite{liu2022meddg} with over 17K conversations across 15 disease types. We manually selected 500 high-quality samples for evaluation (see Appendix~\ref{Dataset selection criteria and process for MedDG} for selection process). \textit{Importantly, Open-ended responses from patient are allowed in MedDG to validate UoT's generalization in open-ended scenarios.} Both datasets are limited to 5 turns. \textbf{Troubleshooting} is a scenario where a customer support technician interacts with customers to identify and resolve faults or issues within computer systems, electronic devices, machinery, or other complex systems. Raghu et al.\cite{raghu-etal-2021-end} introduce FloDial with 894 dialogues, containing 153 faults and we also conduct the data preprocessing of FloDial (See Appendix~\ref{Preprocessing of FloDial} for details). We evaluate using a maximum of 20 turns. The answerer, simulated by GPT-4, is prompted with the patient's actual disease and conversation details for each case. For more details, refer to Appendix~\ref{Scenarios Settings and Datasets} and see examples of these scenarios in Appendix~\ref{examples in scenarios}. 

% \vspace{-1mm}
\paragraph{UoT (Open Set) Setup} We iteratively update LLMs' perceived possibilities based on conversational history, rather than defining them all upfront. In medical diagnosis and troubleshooting, initial descriptions from symptoms or issues help set up initial possibilities. In the 20-question game, we start with broad inquiries using the Direct Prompting method for the first three rounds to gather more information. The ToT tree structure method employs a similar strategy. Setup details in Appendix~\ref{UoT (Open Set) Setup}.

% more information. Then, UoT method updates the possibility set each round to refine questioning strategy

% \vspace{-1mm}
\paragraph{Evaluation Metrics}
% To measure efficacy and efficiency, we use: \textbf{Success Rate (\%)}: $ \textbf{SR} = S / T$, where $S$ is the number of successful cases, and $T$ is the total number of cases; \textbf{Mean Conversation Length in Successful Cases}: $ \textbf{MSC} = R_s / S $, where $R_s$ is the total rounds in successful cases; \textbf{Mean Conversation Length}:  $\textbf{MCL} = R / T $, where $R$ is the total rounds in all cases.
To measure efficacy and efficiency, we use: \textbf{Success Rate (\%)}: $ \textbf{SR} = S / T$, where $S$ is the number of successful cases, and $T$ is the total number of cases; \textbf{Mean Conversation Length in Successful Cases}: $ \textbf{MSC} = R_s / S $, where $R_s$ is the total rounds in successful cases; \textbf{Mean Conversation Length}:  $\textbf{MCL} = R / T $, where $R$ is the total rounds in all cases. \textbf{MCL} measures efficiency based on the resources used in both successes and failures.

% \textbf{MCL} measures efficiency based on the resources used in both successes and failures. 

% \vspace{-3mm}
\subsection{Performance } 

%\zhiyuan{incorperate open-set result here}
% \vspace{-1mm}

\paragraph{20 Questions} As illustrated in Table \ref{result: 20q}, for all types of LLMs, those equipped with UoT outperform the baselines in both open set and close settings. Among the methods used on GPT-4 to enhance planning and reasoning, CoT (CS) and PP (CS) show inferior performance even compared to GPT-4 alone. 
UoT (OS) demonstrates superior performance, with with an average 8.7\% improvement than Adapted-ToT (OS) in success rate. Moreover, UoT (CS) achieves the highest success rate, surpassing the second-best Reflexion by an average of 4.3\%.

% \vspace{-1mm}
\paragraph{Medical Diagnosis} UoT (CS) outperforms baselines in simplified medical diagnostics, achieving a 97.0\% success rate on the DX dataset with GPT-4. On the MedDG dataset, UoT (CS) on Gemini-1.5-Pro and GPT-4 achieve success rates of 81.4\% and 88.0\%. It also reduces conversation lengths to an average MSC of 2.0 on GPT-4 for DX, lower than 3.5 and 3.0 for DP methods. \textit{These results demonstrate the versatility of our UoT in handling both binary and open-ended interactions effectively.}

% \vspace{-1mm}
\paragraph{Troubleshooting}
UoT (CS) with GPT-4 similarly achieves the highest SR of 67.3\%, and the lowest MSC of 7.8. It also shows a remarkable improvement from 43.7\% to 67.3\% in Success Rate. 

% \vspace{-1mm}
\paragraph{Overall Performance}
On average, UoT enhances the success rate by 38.1\% compared to DP across 5 datasets and 5 different LLMs, including open source and commercial models. Notably, Success Rate increases 46.6\% for Llama3-70B. Furthermore, UoT outperforms CoT-SC by 33.8\% and Reflexion by 29.9\%. Even compared to tree structure methods like Original-ToT and Adapted-ToT, UoT still shows superior performance with gains of 28.3\% and 12.4\% respectively. Additionally, Pruned UoT, our pruning method to improve efficiency, outperforms Adapted-ToT by 7.36\%.
Additionally, our study shows that UoT's one-step planning is effective due to effective reward design and question selection. We limit simulations to three steps for budgetary reasons, balancing efficiency and effectiveness (see Appendix~\ref{Depth experiment} for further details on simulation depth). 
To determine whether the differences in success rates between the two methods were statistically significant, we performed a t-test. The results and details are in Appendix~\ref{Statistical Significance}.

% \vspace{-1mm}
\begin{figure}[b]
    \centering
    \includegraphics[width=1\linewidth]{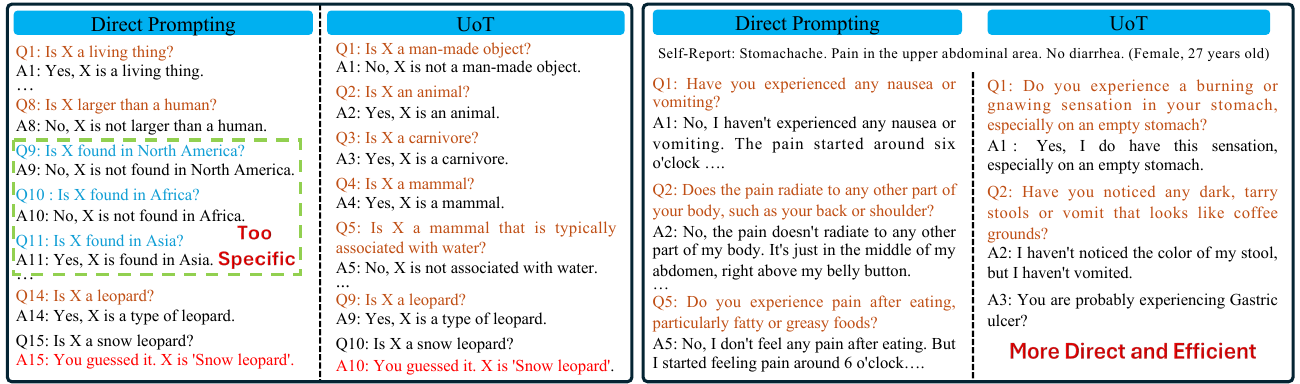}
    % \vspace{-1mm}
    \caption{Case studies from the 20 Questions game (left) and simplified medical diagnosis (right).}
    \label{fig:cases}
    
\end{figure}

\paragraph{Case Studies and Reliability of GPT-4 as answerer}
Figure~\ref{fig:cases} shows UoT, compared to direct prompting, more effectively reduce uncertainty and narrow down candidates, avoiding overly specific queries. After gaining initial information (e.g., stomach pain), it generates targeted questions about related issues rather than general inquiries. Additionally, GPT-4's accuracy as answerer is evaluated by analyzing 10\% of interactions from each dataset, consistently showing reliable responses. For quantitative details, see Appendix~\ref{Reliability}.

% \vspace{-2mm}
\subsection{Analysis}

% \vspace{-1mm}
\subsubsection{Comparing Model Performance at Equal Computational Efficiency}

% \vspace{-1mm}

We compare the performance of approaches with similar computational costs in a closed set setting, in terms of token consumption. To do so, we first prune our UoT as described in section \ref{sec:extension}. Secondly, we expand exploration depth of Adapted-ToT method to bring its token cost in line with that of UoT.

As shown in the top half of Table~\ref{result: token cost}, the Pruned UoT method, despite its reduced efficacy compared to UoT, still outperforms ToT and other methods. Also, the bottom part of Table~\ref{result: token cost} shows that even when increasing the depth of Adapted ToT (Adapted-ToT (\(D=4\))) to match the token cost of UoT (\(D=3\)), it still underperforms compared to UoT.

% \vspace{-3mm}
\begin{figure}[ht]
\begin{minipage}{0.49\textwidth}
    \centering
    \captionof{table}{Average success rates for 20Q, MD, and TB at comparable efficiency, measured by GPT-4 token use. $k$ is sampling count, $D$ is tree depth.}
    \small
    \setlength{\tabcolsep}{3pt}
    \begin{tabular}{l|c|ccc}
    \toprule
    {\bf Method} & Tokens & 20Q & MD & TB \\
    \midrule
    CoT-SC($k=33$) & 4.6k & 32.6 & 37.6 & 42.5 \\
    Orig-ToT($D=3$) & 4.5k & 23.7 & 65.3 & 40.4 \\
    Adapt-ToT($D=3$) & 4.5k & 33.8 & 85.1 & 60.3 \\
    Pruned UoT($D=3$) & 4.7k & \textbf{48.1} & \textbf{88.4} & \textbf{63.2} \\
    \midrule
    Adapt-ToT($D=4$) & 9.3k & 40.9 & 86.7 & 63.7 \\
    UoT($D=3$) & 9.2k & \textbf{54.4} & \textbf{92.5} & \textbf{66.0} \\
    \bottomrule
    \end{tabular}
    \label{result: token cost}
\end{minipage}\hfill
\begin{minipage}{0.48\textwidth}
    \centering
    \includegraphics[width=\linewidth]{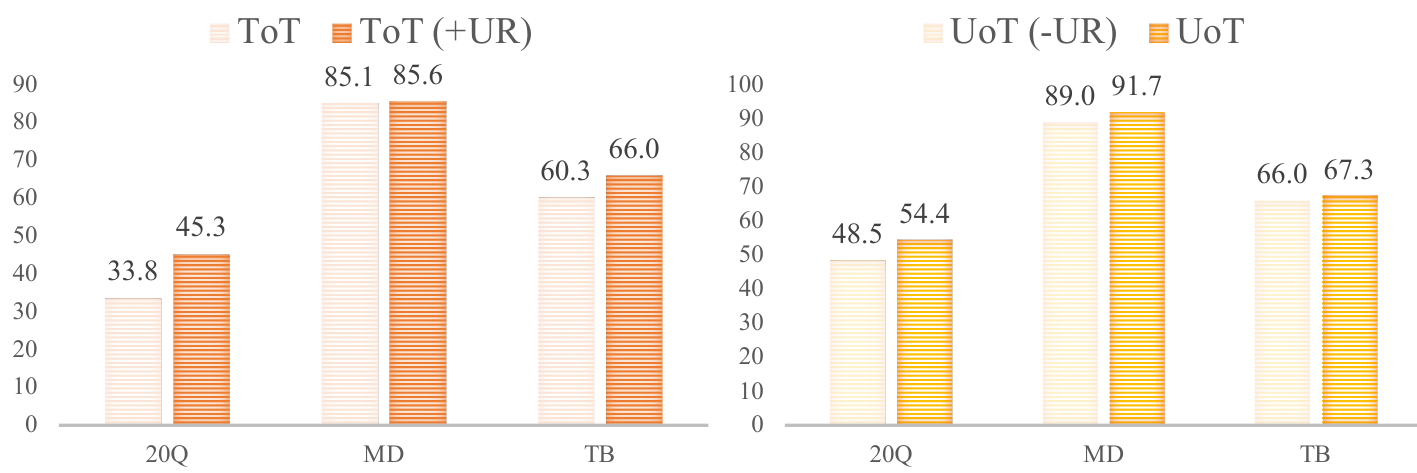}
    \caption{Success rate comparison between Adapted-ToT and Adapted-ToT using uncertainty reward, and between UoT and UoT without uncertainty reward.}
    \label{reward_eff}
\end{minipage}
\end{figure}
% \vspace{-3mm}

\subsubsection{Effectiveness of Uncertainty Rewards}
\label{Effectiveness of uncertainty reward}

To further demonstrate the effectiveness of our uncertainty-based reward, we compare it with the self-evaluation reward used in the original ToT based on GPT-4 model. We implement the uncertainty-based reward in place of the self-evaluation reward in ToT, creating a variant we call ToT (+UR). The results, as shown in left side of Figure~\ref{reward_eff}, indicate that our reward significantly enhances planning efficacy by an average of 5.9\%. Additionally, we use the heuristic self-evaluation reward in Adapted-ToT to replace our current uncertainty-based reward in UoT, a variant we refer to as UoT (-UR). This change results in a performance decrease shown in the right part of Figure~\ref{reward_eff}, further validating the effectiveness of our uncertainty-based reward. Moreover, the performance of UoT (-UR) still surpasses that of Adapted-ToT illustrated in Table~\ref{result: 20q},

% \vspace{-2mm}
\section{Related Work}
% \vspace{-2mm}

\paragraph{Planning and Reasoning of LLMs}
LLMs show prowess in planning and reasoning. Wei et al.\cite{wei2022chain} introduced CoT prompting for intermediate reasoning; Yao et al.\cite{yao2023tree} proposed ToT prompting using DFS/BFS. 
Besta et al.\cite{besta2023graph} present GoT to solve elaborate problems.
Feng et al.\cite{feng2023alphazero} illustrated TS-LLM's tree-search guided decoding. ReAct \cite{yao2022react} offers acting-based prompting, while Reflexion \cite{shinn2023reflexion} enhances this with feedback reflection.
Zhou et al.\cite{zhou2023language} unify reasoning and planning.

\paragraph{Decision-making and Information-seeking by LLMs}
LLMs have evolved as decision-making tools, with models like LLM+P \cite{liu2023llm+} and LLM-DP \cite{dagan2023dynamic} combining external planners and LLMs for natural language-based programming. RAP \cite{hao2023reasoning} goes beyond structured language, using LLMs with Monte Carlo Tree Search (MCTS) \cite{chaslot2008monte} for dynamic decision-making. This approach is also seen in the work of Zhao et al.\cite{zhao2023large}, applying MCTS and LLM knowledge for complex tasks like robot control. However, MCTS struggles in uncertain scenarios due to its reliance on terminal states and specific modules for rewards and action selection. Additionally, to enhance LLMs' questioning abilities, Deng et al.\cite{deng2023rephrase} introduce the Rephrase and Respond method. AVIS \cite{hu2023avis} represents an autonomous visual question answering system that uses external tools. Pan et al.\cite{pan2023kwaiagents} introduce KwaiAgents for processing queries, following guidelines, and accessing external documents.
Frameworks such as MEDIQ \cite{li2024mediq} and MDAgents \cite{kim2024adaptive} improve the reliability of LLMs in clinical settings by strengthening information-seeking capabilities and agent systems, thereby supporting more realistic diagnostic processes. Bertolazzi et al. \cite{bertolazzi2023chatgpt} explore Chatgpt’s information seeking strategy in 20-questions game.

\section{Limitation and Future Work}

In practice, $\Omega_v^A$ and $\Omega_v^N$ might overlap, as different answers (such as ``yes" or ``no") may lead to the exclusion of different sets of possibilities. Another similar limitation is that some questions or answers may not fully eliminate certain possibilities (e.g.,``I don't have a fever" does not 100\% eliminate the possibility of having COVID-19). Furthermore, compared to completely open-ended interaction in medical diagnosis or troubleshooting, our current benchmark represents a simplified scenario. In theory, such cases could be handled using the method of converting interactions into probability estimations and applying some kind of Bayesian update to the probabilities of each possibility, rather than just eliminating some subset.

% \vspace{-2mm}
\section{Conclusion and Discussion}
% \vspace{-2mm}

This paper presents the Uncertainty of Thoughts (UoT) algorithm, significantly improving LLMs in tasks requiring active information seeking through tree-based simulation, uncertainty-based rewards and a reward propagation scheme. On five datasets UoT increases success rate by 38.1\% on average, establishing a new benchmark for evaluating LLMs in active information-seeking tasks. We evaluate UoT on simplified scenarios; more realistic scenarios raise challenges like allowing incomplete elimination of possibilities by answers, and others which we leave for future work. 
% We further discuss these limitations and future work in ?Appendix~\ref{limitation}.

\section{Acknowledgment}

Pang Wei Koh is supported by the Singapore National Research Foundation and the National AI Group in the Singapore Ministry of Digital Development and Innovation under the AI Visiting Professorship Programme (award number AIVP-2024-001).

\bibliography{iclr2024_conference}
\bibliographystyle{iclr2024_conference}

\clearpage
\newpage
\appendix

\section{Derivation of Information Gain Formula}
\label{appendix:proof}
Recall that the information gain at node $v$ is defined as the expected change in uncertainty (or entropy) when receiving an answer at this node, which we defined as:
\begin{align}
    IG_v(X) := H_v(X) - p_v^A \cdot H_v^Y(X) - p_v^N \cdot H_v^N(X)
\end{align}
We now show that:
\begin{proposition}
The information gain at node $v$ is equal to:
\begin{align}
    IG_v(X) = -p_v^A\log p_v^A - p_v^N\log p_v^N
\end{align}
\end{proposition}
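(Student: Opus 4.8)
The plan is to expand the three entropies appearing in the definition
$IG_v(X) = H_v(X) - p_v^A H_v^A(X) - p_v^N H_v^N(X)$
and show that the ``within-branch'' entropy terms cancel, leaving only the binary-split entropy $-p_v^A\log p_v^A - p_v^N\log p_v^N$. I will work with the standard entropy $H_v(X) = -\sum_{i:\omega_i\in\Omega_v} p(x_i\mid\Omega_v)\log p(x_i\mid\Omega_v)$ (Eq.~\eqref{eq:entropy} applied to the conditional distribution on $\Omega_v$), and likewise for $H_v^A$ and $H_v^N$. The first move is to use the disjoint decomposition $\Omega_v = \Omega_v^A \cup \Omega_v^N$ to break the sum defining $H_v(X)$ into a sum over $\Omega_v^A$ and a sum over $\Omega_v^N$.

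The crux of the argument is a chain-rule identity for conditioning on nested events. Since $\Omega_v^A \subseteq \Omega_v$, for every $i$ with $\omega_i \in \Omega_v^A$ we have
$p(x_i \mid \Omega_v) = p(x_i)/p(\Omega_v) = \bigl(p(x_i)/p(\Omega_v^A)\bigr)\cdot\bigl(p(\Omega_v^A)/p(\Omega_v)\bigr) = p_v^A\, p(x_i \mid \Omega_v^A)$,
and symmetrically $p(x_i \mid \Omega_v) = p_v^N\, p(x_i \mid \Omega_v^N)$ on $\Omega_v^N$. Substituting the first identity into the $\Omega_v^A$ part of the split sum and expanding $\log\!\bigl(p_v^A\,p(x_i\mid\Omega_v^A)\bigr) = \log p_v^A + \log p(x_i\mid\Omega_v^A)$ yields two pieces: one proportional to $\log p_v^A$ weighted by $\sum_{i:\omega_i\in\Omega_v^A} p(x_i\mid\Omega_v^A) = 1$, and one equal to $p_v^A H_v^A(X)$ by the definition of $H_v^A$. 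The same treatment of the $\Omega_v^N$ part gives the analogous pieces with $p_v^N$ and $H_v^N(X)$.

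Collecting terms then gives
$H_v(X) = -p_v^A\log p_v^A - p_v^N\log p_v^N + p_v^A H_v^A(X) + p_v^N H_v^N(X)$,
and subtracting $p_v^A H_v^A(X) + p_v^N H_v^N(X)$ from both sides isolates $IG_v(X)$ as the claimed expression. The main obstacle is purely conceptual rather than computational: one must correctly establish and apply the nested-conditioning identity $p(x_i\mid\Omega_v) = p_v^A\,p(x_i\mid\Omega_v^A)$, since everything downstream is routine algebra once the two conditional distributions are related through the branch probabilities $p_v^A, p_v^N$. I would also flag the sign convention, using the negative-sign entropy throughout so that the final form is manifestly nonnegative, consistent with the remark following the statement.
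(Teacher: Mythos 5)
Your proof is correct and takes essentially the same route as the paper's: both rest on the nested-conditioning identity $p(x_i \mid \Omega_v) = p_v^A\, p(x_i \mid \Omega_v^A)$ (resp.\ $p_v^N$ on $\Omega_v^N$), the disjoint split of the sum over $\Omega_v$ into $\Omega_v^A$ and $\Omega_v^N$, and the observation that the resulting log-ratio terms are constant in $i$, so they collapse to $-p_v^A\log p_v^A - p_v^N\log p_v^N$. The only difference is organizational—you first establish the grouping identity $H_v(X) = -p_v^A\log p_v^A - p_v^N\log p_v^N + p_v^A H_v^A(X) + p_v^N H_v^N(X)$ and then subtract, whereas the paper manipulates the full $IG_v(X)$ expression directly—and your handling of the weights $p_v^A, p_v^N$ is in fact slightly cleaner, since the paper's displayed algebra drops these factors in one step and reinstates them in a compensating one.
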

\begin{proof}
Note that for any outcome $x_i$, we have by the rules of conditional probability:
\begin{align}
    p(x_i | \Omega_v^A) = \frac{p(x_i | \Omega_v)}{p(\Omega_v^A | \Omega_v)} = \frac{p(x_i | \Omega_v)}{p_v^A}
\end{align}
Now the information gain is:
\begin{align*}
    & IG_v(X) \\
    &= H_v(X) - p_v^A \cdot H_v^A(X) - p_v^N \cdot H_v^N(X)\\
   &= - \sum_{i: \omega_i \in \Omega_v} p(x_i | \Omega_v) \log p(x_i | \Omega_v) \\
   &+ p_v^A \sum_{i: \omega_i \in \Omega_v^A} p(x_i | \Omega_v^A) \log p(x_i | \Omega_v^A) \\
    &+ p_v^N \sum_{i: \omega_i \in \Omega_v^N} p(x_i | \Omega_v^N) \log p(x_i | \Omega_v^N)\\
   &= \sum_{i: \omega_i \in \Omega_v^A} p(x_i | \Omega_v^A) (\log p(x_i | \Omega_v^A) - \log p(x_i | \Omega_v)) \\
    &+ \sum_{i: \omega_i \in \Omega_v^N} p(x_i | \Omega_v^N) (\log p(x_i | \Omega_v^N) - \log p(x_i | \Omega_v)),
\end{align*}
where the last equality holds by $p_v^A \cdot p(x_i | \Omega_v^A) = p(x_i | \Omega_v)$, and similarly for $p_v^N$. We further compute that 
\begin{align*}
& \sum_{i: \omega_i \in \Omega_v^A} p(x_i | \Omega_v^A) (\log p(x_i | \Omega_v^A) - \log p(x_i | \Omega_v)) \\
& = \sum_{i: \omega_i \in \Omega_v^A} p(x_i | \Omega_v^A) \log \frac{p(x_i | \Omega_v^A)}{p(x_i | \Omega_v)} \\
& = -\sum_{i: \omega_i \in \Omega_v^A} p(x_i | \Omega_v^A) \log p_v^A \\
& = -p_v^A \log p_v^A
\end{align*}
Analogously the remaining term is $-p_v^N \log p_v^N$. Finally we conclude that
\begin{align}
    IG_v(X) = -p_v^A\log p_v^A - p_v^N\log p_v^N
\end{align}
\end{proof}

In fact, this proposition can also be proven using some properties of information theory, particularly the definitions of conditional entropy and mutual information. As the more computational proof shown here is still relatively short and does not require defining certain additional probability distributions, we provide the computational proof here instead.

\section{Comparison of Various Scaling Methods in  Reward Function Design}
\label{comparison of reward design}
We also consider multiple scaling schemes, including Logarithmic Transformation Scaling, Sigmoid Transformation Scaling and Piecewise Function Scaling. The results demonstrate that our current setting is the optimal one. Additionally, our current design, particularly setting lambda > 0, is intended as a straightforward method to incorporate our preference for a sharper reward, as it accelerates the decay of rewards as we move away from 0.5. Furthermore, it is also intended to penalize questions that are too specific when the set of possibilities remains relatively large as $|p_v^A - p_v^N|$ will be large. We elaborate all the scaling methods and their corresponding results below. 

\paragraph{Vanilla Expected Information Gain (IG)}
\begin{align}
     IG_v(X) = -p_v^A \log p_v^A - p_v^N \log p_v^N
\end{align}

\paragraph{Logarithmic Transformation Scaling (LTS), where $k = 1$}
\begin{equation}
    L(IG_v(X)) = \frac{\log(1 + k \cdot IG_v(X))}{\log(1 + k)}
\end{equation}

\paragraph{Sigmoid Transformation Scaling (STS), where $\tau = 10$ and $\theta=0.5$}
\begin{equation}
 S(IG_v(X)) = \frac{1}{1 + e^{-\tau(IG_v(X) - \theta)}}
\end{equation}

\paragraph{Piecewise Function Scaling  (PFS), where $\lambda=0.5$}
\begin{equation}
 P(IG_v(X), p_v^A) = 
 \begin{cases} 
 \frac{IG_v(X)}{\lambda} \cdot p_v^A & \text{if } p_v^A \leq \lambda \\
 \frac{IG_v(X)}{1-\lambda} \cdot (1 - p_v^A) & \text{if } p_v^A > \lambda
 \end{cases}
\end{equation}

\paragraph{Uncertainty-based Reward (UR)}

\begin{align}
    R_u(v) = \widetilde{IG}_v(X) := \frac{-p_v^A\log p_v^A - p_v^N\log p_v^N}{1 + \lambda^{-1}|p_v^A - p_v^N|}
\end{align}

In particular, in this experiment, we use 20Q-BIG-bench (introduced in \S\ref{Scenarios Settings and Datasets}) and Common dataset instead of Thing dataset in 20 Question scenario. Datasets are the same as the main chapters in other scenarios.

\begin{table}[h]
\centering
\begin{tabular}{c|ccccc}
\hline
\textbf{Model} & \textbf{20Q-BIG-bench} & \textbf{Common} & \textbf{DX} & \textbf{MedDG} & \textbf{FloDial} \\ \hline
IG             & 51.7                   & 41.4            & 90.4        & 81.1           & \textbf{67.9}            \\ \hline
LTS            & 51.7                   & 40.5            & 91.3        & 78.0           & 65.4            \\ \hline
STS            & 51.3                   & 35.1            & 89.4        & \textbf{82.3}           & 63.4            \\ \hline
PFS            & 37.9                   & 36.9            & 89.4        & 81.3           & 67.1            \\ \hline
UR             & \textbf{51.7}                   & \textbf{44.2}            & \textbf{92.1}        & 81.3           & 67.1          \\ \hline
\end{tabular}
\vspace{2mm}
\caption{Performance(Successful Rate) comparison of different reward methods based on GPT-3.5}
\end{table}

\section{Comparison of Different Reward propagation Schemes}
\label{Reward propagation Schemes}
We also consider different reward propagation schemes and introduce their benefits as well as drawbacks.

Cumulative Reward Path Selection (CRPS): We used the strategy of calculating and comparing the cumulative reward for each path (from the root node to the leaf node), which involves multiplying the rewards of all nodes along the path and then selecting the path with the highest cumulative reward for the first question to interact with the user. This method focuses on identifying the single path that is most likely to yield a high reward. Its main limitation is that it may rely too heavily on the performance of a single path, neglecting the exploration of the overall problem space.

UoT-Max: Similar to the reward propagation scheme we are currently using, we considered adopting the approach of selecting the maximum reward among the children nodes (when the node is a questioner node) in the calculation of the expected reward. Opting for the maximum child node reward tends to pursue high rewards more aggressively, which may be more effective in some situations but could also overlook the need for exploration, potentially not always being optimal in the long run.

\begin{align*}
    R_e(v) := \begin{cases}
		R_a(v) & \text{if $v$ is a leaf; otherwise:}\\
		p_v^A R_e(v^A) + p_v^N R_e(v^N) & \text{if $v$ is an Answerer Node}\\
        \max_{w \in \mathsf{Children}(v)} R_e(w) & \text{if $v$ is a Questioner Node}\\ 	 
        \end{cases}
\end{align*}

In particular, in this experiment, we use 20Q-BIG-bench (introduced in \S\ref{Scenarios Settings and Datasets}) and Common dataset instead of Thing dataset in 20 Question scenario. Datasets are the same as the main chapters in other scenarios.

\begin{table}[h]
\centering
\begin{tabular}{c|c|ccccc}
\hline
\textbf{Models} & \textbf{Method} & \textbf{20Q-BIG-bench} & \textbf{Common} & \textbf{DX} & \textbf{MedDG} & \textbf{FloDial} \\ \hline
\multirow{3}{*}{GPT-3.5} & CRPS & 62.1 & 47.7 & \textbf{92.1} & 81.3 & 56.2 \\ \cline{2-7} 
 & UoT-Max & 48.3 & 41.4 & \textbf{92.1} & 80.3 & 60.1 \\ \cline{2-7} 
 & UoT & \textbf{65.5} & \textbf{62.2} & \textbf{92.1} & \textbf{83.3} & \textbf{63.2} \\ \hline
\multirow{3}{*}{GPT-4} & CRPS & 75.9 & 68.5 & 94.2 & 82.9 & 61.4 \\ \cline{2-7} 
 & UoT-Max & 79.3 & 63.7 & 95.1 & 83.1 & 62.6 \\ \cline{2-7} 
 & UoT & \textbf{79.3} & \textbf{71.2} & \textbf{97.0} & \textbf{88.0} & \textbf{67.3} \\ \hline
\end{tabular}
\vspace{2mm}
\caption{Performance (Success Rate) comparison of different reward propagation schemes. The results also demonstrate the superiority of our current reward propagation scheme.}
\end{table}

Compared to the other two reward propagation schemes, the existing approach takes into account the cumulative rewards of all paths, providing a more holistic and balanced decision-making mechanism. Instead of merely relying on the maximum short-term rewards or the performance of a single path, it is designed to capture long-term benefits, focusing on sustainable outcomes rather than immediate short-term gains.

\section{Experimental Performance for Earlier Released LLMs}
\label{Performance}

In these experiments, we use 20Q-BIG-bench (introduced in \S\ref{Scenarios Settings and Datasets}) and Common dataset instead of Thing dataset in 20 Question scenario. Datasets are the same as the main chapters in other scenarios.

\begin{table*}[htb]

\footnotesize
\centering
\caption{Results from three different scenarios, assessing Success Rate (SR), Mean Conversation Length in Successful Cases (MSC), and Mean Conversation Length (MCL).}
\vspace{1mm}
\resizebox{\textwidth}{!}{
\begin{tabular}{ll|cccccc|cccccc|ccc} 

\toprule
    \multirow{3}{*}{\bf Model} &\multirow{3}{*}{\bf Method} & \multicolumn{6}{|c}{\bf 20 Questions} & \multicolumn{6}{|c}{\bf Medical Diagnosis} & \multicolumn{3}{|c}{\bf Troubleshooting} \\
    & & \multicolumn{3}{|c}{\bf 20Q in BIG-bench} & \multicolumn{3}{|c}{\bf Common} & \multicolumn{3}{|c}{\bf DX} & \multicolumn{3}{|c}{\bf MedDG} & \multicolumn{3}{|c}{\bf FloDial}  \\ 
    & & SR$\uparrow$ & MSC$\downarrow$ & MCL$\downarrow$ & SR$\uparrow$ & MSC$\downarrow$ & MCL$\downarrow$ & SR$\uparrow$ & MSC$\downarrow$ & MCL$\downarrow$ & SR$\uparrow$ & MSC$\downarrow$ & MCL$\downarrow$ & SR$\uparrow$ & MSC$\downarrow$ & MCL$\downarrow$ \\

\midrule
     \multirow{3}*{Llama2-70B} 
     &DP(OS) &6.90 &\textbf{12.0} &19.5 &1.80 &\textbf{11.0} &19.8 &13.4 &3.1 &4.8 &23.7 &3.4 &4.6 &11.1 &15.1 &19.5 \\
     % &UoT(OS)&&&&&&&&&&&&& \\
     &DP(CS) &17.2 &13.5 &18.9 &6.31 &12.0 &19.7 &29.8 &3.0 &4.4 &28.0 &3.5 &4.6 &24.2 &\textbf{14.5} &18.7 \\
     &UoT(CS) &\textbf{20.7} & 13.2 &\textbf{18.6} &\textbf{10.8} &15.6 &\textbf{19.5} &\textbf{51.9} &\textbf{1.8} &\textbf{3.4} &\textbf{33.9} &\textbf{1.4} &\textbf{3.8} &\textbf{31.4} &15.8 &\textbf{18.7} \\

\midrule
    \multirow{3}*{Cohere} 
     &DP(OS) & 3.45 & 15.0 & 19.8 & 1.80 & 14.0 & 19.9 & 19.8 & 3.7 & 4.7 & 25.0 & 3.6 & 4.7 &16.3 &16.7 &19.5 \\
     % &UoT(OS)&&&&&&&&&&&&& \\
     &DP(CS) & 6.90 & 12.0 & 19.4 & 1.80 & 12.5 & 19.8 & 35.6 & 3.3 & 4.4 & 33.3 & 4.0 & 4.7 &27.5 &16.3 &19.0 \\
     &UoT(CS) & \textbf{34.3} & \textbf{8.50} & \textbf{16.0} & \textbf{16.2} & \textbf{11.7} & \textbf{18.6}  & \textbf{45.5} & \textbf{2.6} & \textbf{3.9} & \textbf{75.7} & \textbf{2.7} & \textbf{3.3} &\textbf{41.4} &\textbf{8.7} &\textbf{15.3} \\ 

\midrule
    \multirow{3}*{PaLM 2} 
    \quad &DP(OS) &37.9 & 13.5 & 17.5 & 35.1 & 14.4 & 18.0 & 7.69 & 3.9 & 4.9 & 11.3 & 4.0 & 4.9 &22.6 &15.2 &19.0 \\
     % &UoT(OS)&&&&&&&&&&&&& \\
    \quad &DP(CS) & 51.7 & 13.2 & 16.5 & 53.1 & 13.9 & 16.8 & 7.92 & 3.4 & 4.9 & 34.0 & 4.4 & 4.8  &30.1 &15.0 &18.5 \\
    \quad &UoT(CS) &\textbf{72.4} &\textbf{7.0} &\textbf{10.6} &\textbf{62.1} &\textbf{12.5} &\textbf{15.3} & \textbf{75.0} & \textbf{2.1} & \textbf{2.8} & \textbf{80.7} & \textbf{2.2} & \textbf{2.7} &\textbf{48.4} &\textbf{7.6} &\textbf{14.0} \\

\midrule
     \multirow{3}*{Gemini-1.0-Pro} 
     &DP(OS) & 10.3 & 8.3 & 18.8 & 11.7 & 10.0 & 18.8 & 12.5 & 3.2 & 4.8 & 30.7 & 3.7 & 4.6 & 2.61 & 13.0 & 19.8\\
     % &UoT(OS)&&&&&&&&&&&&& \\
     &DP(CS) & 20.7 & 14.8 & 18.9 & 12.6 & 12.0 & 19.0 & 64.4 & 3.3 & 3.9 & 40.7 & 3.5 & 4.4 & 5.23 & 16.1 & 19.8\\
     &UoT(CS) & \textbf{31.0} & \textbf{7.8} & \textbf{16.2} & \textbf{18.9} & \textbf{4.0} & \textbf{17.0} &  \textbf{67.3} & \textbf{2.1} & \textbf{3.7} & \textbf{75.0} & \textbf{1.4} & \textbf{2.7} & \textbf{14.2} & \textbf{10.6} & \textbf{18.6} \\

\midrule
    \multirow{3}*{Claude2} 
    
    \quad &DP(OS) & 48.3 & 9.8 & 15.1 & 29.7 & 13.8 & 18.2 & 45.2 & 3.0 & 4.1 & 60.7 & 4.1 & 4.5 &39.7 &14.3 &17.7 \\
     % &UoT(OS)&&&&&&&&&&&&& \\
    \quad &DP(CS) & 72.4 & 11.6 & 13.9 & 43.2 & 13.8 & 17.3 & 97.1 & 2.4 & 2.5 & 83.0 & 4.3 & 4.4 &42.9 &15.7 &18.2 \\%&&& \\
    \quad &UoT & \textbf{75.9} & \textbf{5.1} & \textbf{8.69} & \textbf{61.3} & \textbf{9.8} & \textbf{13.7} & \textbf{98.0} & \textbf{2.3} & \textbf{2.4}& \textbf{88.3} & \textbf{2.7} & \textbf{2.9}  &\textbf{52.6} &\textbf{6.3} &\textbf{12.8} \\%&&& \\%\\ 

\midrule
    \multirow{4}*{GPT-3.5} 
    \quad &DP(OS) &36.0 &12.6 & 17.3 & 32.6 & 14.6 & 18.2 & 18.8 & 3.5 & 4.7 & 25.0 & 3.5 & 4.6 &19.4 &12.3 &18.5\\
    \quad &UoT(OS) & 41.4 & 13.8 & 17.4 & 34.2 & 14.7 & 18.2 & 37.5 & 2.4 & 4.0 & 61.0 & 2.3 & 3.3 & 26.1 & 11.3 & 17.7 \\
    \quad &DP(CS) & 44.8 & 13.2 & 17.0 & 40.0 & 14.8 & 17.8 & 49.5 & 2.7 & 3.3 & 42.3 & 3.8 & 4.5 &22.6 &13.3 &18.5 \\
    \quad &UoT(CS) & \textbf{51.7} & \textbf{5.3} & \textbf{12.4} & \textbf{44.2} & \textbf{10.9} & \textbf{16.0} & \textbf{92.1} & \textbf{2.1} & \textbf{2.4} & \textbf{81.3} & \textbf{2.4} & \textbf{2.9} &\textbf{67.1} &\textbf{6.9} &\textbf{11.2} \\ 

\bottomrule
\end{tabular}
}
\vspace{-4mm}
\label{result: 20q}
\end{table*}

\vspace{-2mm}
\section{Effect of Simulation Depth}
\label{Depth experiment}

In this experiment, we use 20Q-BIG-bench (introduced in \S\ref{Scenarios Settings and Datasets}) and Common dataset instead of Thing dataset in 20 Question scenario. Datasets are the same as the main chapters in other scenarios.

As the below figure illustrates, we analyze the impact of simulation steps. Even with one-step reasoning and planning, our method can still have a strong performance, further indicating the effectiveness of our reward design and question selection mechanism. With the increase of the step, the performance can gradually rise. However, due to the constraints of computation resources and OpenAI API budgets, we only explore the simulation to the third step and argue that it can be the practical tradeoff between performance and efficiency. 

% \begin{figure}[htb]
%     \centering
%     \includegraphics[width=0.7\linewidth]{}
%     \caption{Results comparison of different depth settings in UoT.}
%     \label{Depth}
%     \vspace{-3mm}
% \end{figure}

\begin{center}
\begin{tikzpicture}
    \begin{axis}[
        width=11cm,
        height=8.5cm,
        xlabel={Depth},
        ylabel={Success Rate (\%)},
        ymin=60, ymax=95,
        ytick={60, 65, 70, 75, 80, 85, 90, 95},
        xtick={1, 2, 3},
        legend style={at={(0.5,-0.25)}, anchor=north, legend columns=-1}, % 调整图例位置
        legend cell align={left},
        grid=major,
        nodes near coords, % 自动在数据点上显示数值
        point meta=explicit symbolic % 显示每个坐标的指定值
    ]

    % 20 Questions
    \addplot[
        color=blue,
        mark=triangle*,
        line width=1pt
    ] coordinates {
        (1, 66.1)[66.1]
        (2, 67)[67]
        (3, 75.2)[75.2]
    };
    \addlegendentry{20 Questions}

    % Medical Diagnosis
    \addplot[
        color=orange,
        mark=triangle*,
        line width=1pt
    ] coordinates {
        (1, 82.6)[82.6]
        (2, 86.7)[86.7]
        (3, 92.5)[92.5]
    };
    \addlegendentry{Medical Diagnosis}

    % Troubleshooting
    \addplot[
        color=gray,
        mark=triangle*,
        line width=1pt
    ] coordinates {
        (1, 63.5)[63.5]
        (2, 64.7)[64.7]
        (3, 67.3)[67.3]
    };
    \addlegendentry{Troubleshooting}

    \end{axis}
    \label{Depth}
\end{tikzpicture}
\end{center}

\section{Reliability of GPT-4 as the Environment}
\label{Reliability}

As the impressive understanding ability of LLMs, previous research has validated the effectiveness of evaluators served by ChatGPT or GPT-4 \cite{chiang2023can, liu2303g}. Consequently, we also adopt GPT-4 as the environment to provide feedback on our work. Prompts can be found in Appendix \ref{examiner prompt}. To assess the accuracy and reliability of employing GPT-4 as the environment simulator, we randomly sample 10\% interaction records (including the final judgment and intermediate feedback from the environment) from each dataset. As Figure~\ref{evaluator} shows, GPT-4 can provide completely accurate judgment and also keep a high level of accurate feedback during the interaction. These experimental results can further support the effectiveness of our method.

\begin{table}[ht]
\centering
\caption{Human evaluation results for the accuracy of environment feedback served by GPT-4. IF represent the Accuracy of \textbf{I}ntermediate \textbf{F}eedback.}
\vspace{1mm}
\begin{tabular}{l|ccc} 

\toprule
    {\bf Scenario}   & Judgement & IF \\
\midrule
        20 Questions &100 &93.7 \\ 
        Medical Diagnosis &100  &94.4  \\ 
        Troubleshooting & 100 & 92.9  \\   
\bottomrule
\end{tabular}
\label{evaluator}

\end{table}

\section{Reward Function Details and Its Curve}
\label{reward fuction}

Refer to Figure \ref{reward curve} for the curve of uncertainty-based reward function. 

\begin{figure}
    \centering
    \includegraphics[width=0.7\linewidth]{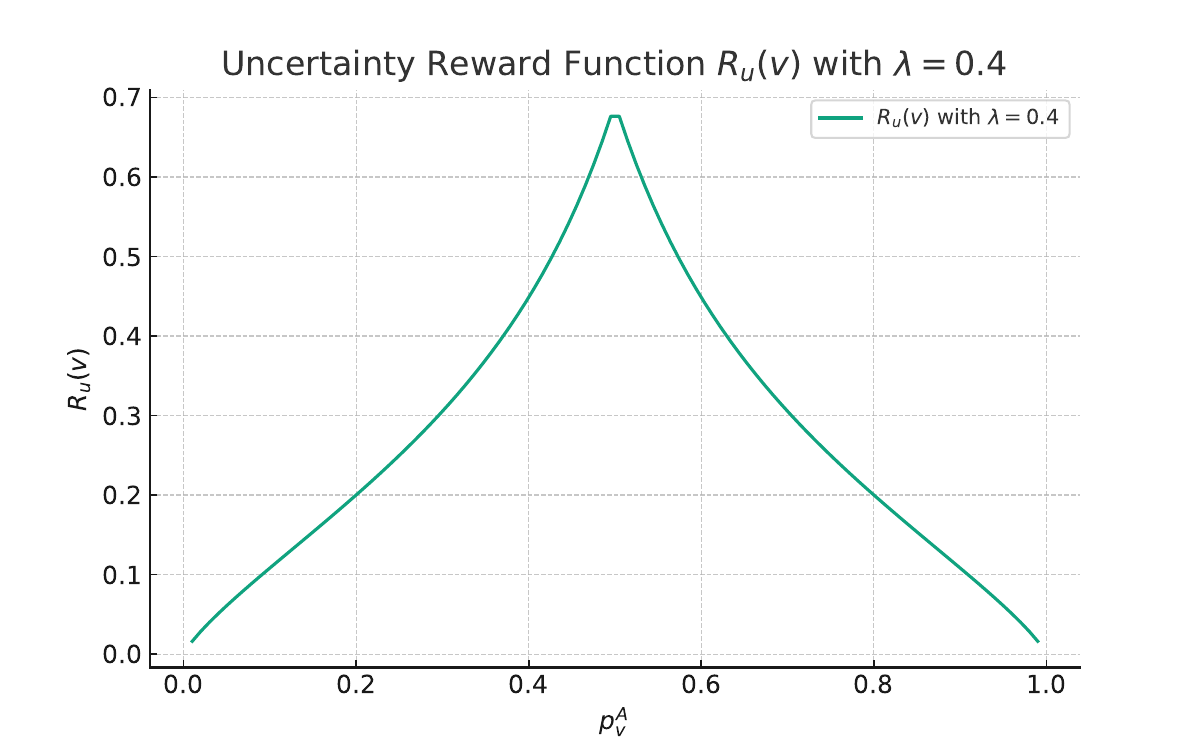}
    \caption{Curve of uncertainty based reward on Eq \ref{uncertainty reward eq}, where $p_v^N$ can be replaced by $(1-p_v^A)$. The horizontal axis $p_v^A$ is conditional probabilities of affirmative at node v, which are introduced in Section \S\ref{sec:reward}.}
    \label{reward curve}
\end{figure}

% \section{Limitation and Future Work}
% \label{limitation}

% In practice, $\Omega_v^A$ and $\Omega_v^N$ might overlap, as different answers (such as ``yes" or ``no") may lead to the exclusion of different sets of possibilities. Another similar limitation is that some questions or answers may not fully eliminate certain possibilities (e.g.,``I don't have a fever" does not 100\% eliminate the possibility of having COVID-19). Furthermore, compared to completely open-ended interaction in medical diagnosis or troubleshooting, our current benchmark represents a simplified scenario. In theory, such cases could be handled using the method of converting interactions into probability estimations and applying some kind of Bayesian update to the probabilities of each possibility, rather than just eliminating some subset.

\section{Experimental Statistical Significance}
\label{Statistical Significance}

We conduct three experiments on five datasets using Llama 3 and GPT-4 to compare the performance of Direct Prompting (DP) and UoT methods in a closed-set setting for significance testing. Due to LLM API quota limitations, the number of experiments are restricted. To determine whether the differences in success rates (SR) between the two methods were statistically significant, we performed a t-test. The results are presented below.

\subsection*{GPT-4 Results}

\begin{table}[h!]
\centering
\caption{GPT-4 Comparison of DP and UoT on Success Rates (SR)}
\begin{tabular}{lccccc}
\toprule
\textbf{Dataset} & \textbf{DP (\%)} & \textbf{UoT (\%)} & \textbf{t-Statistic} & \textbf{p-Value} & \textbf{Significance Conclusion} \\
\midrule
Common  & 49.0 & 70.9 & -10.8 & 0.00041 & Significant (p $<$ 0.05) \\
Thing   & 30.8 & 36.8 & -8.04 & 0.00129 & Significant (p $<$ 0.05) \\
DX      & 89.4 & 97.0 & -3.11 & 0.03581 & Significant (p $<$ 0.05) \\
MedDG   & 74.9 & 87.9 & -7.33 & 0.00185 & Significant (p $<$ 0.05) \\
FloDial & 42.5 & 67.8 & -19.8 & 0.00004 & Significant (p $<$ 0.05) \\
\bottomrule
\end{tabular}
\end{table}

\subsection*{Llama 3 Results}

\begin{table}[h!]
\centering
\caption{Llama 3 Comparison of DP and UoT on Success Rates (SR)}
\begin{tabular}{lccccc}
\toprule
\textbf{Dataset} & \textbf{DP (\%)} & \textbf{UoT (\%)} & \textbf{t-Statistic} & \textbf{p-Value} & \textbf{Significance Conclusion} \\
\midrule
Common  & 47.7 & 56.5 & -4.39 & 0.01180 & Significant (p $<$ 0.05) \\
Thing   & 14.8 & 24.8 & -16.0 & 0.00009 & Significant (p $<$ 0.05) \\
DX      & 80.1 & 90.1 & -4.65 & 0.00966 & Significant (p $<$ 0.05) \\
MedDG   & 61.3 & 64.6 & -4.15 & 0.01426 & Significant (p $<$ 0.05) \\
FloDial & 29.9 & 46.4 & -10.5 & 0.00047 & Significant (p $<$ 0.05) \\
\bottomrule
\end{tabular}
\end{table}

The t-test results indicate that UoT significantly outperform DP five datasets (p < 0.05), as evidenced by their higher mean scores.

\section{Experimental Setups}

\subsection{Baselines Setup}
\label{baseline}

\textbf{Chain-of-Thought (CoT)}
\quad
We adapt the typical CoT prompt which instruct LLM to generate the explanation or motivation for the proposed question first, then give the question to ask.

\textbf{Chain of Thought with Self-Consistency (CoT-SC)}
\quad To make the method spend comparable compute to our approach for a fair comparison, we sampled 33 times before deciding on each action with the LLM's temperature of 0.7. The final selected question is the one repeated most times among 33 samples.

\textbf{Planning Prompting} \quad To measure whether LLMs' planning ability can be enhanced through some crafted prompts like CoT, ToT or Reflexion. We design the prompt to enable LLM to simulate multiple different sets of future interactions between questioner and answerer, then let LLM choose one most promising interaction (question) to ask.  

\textbf{Tree of Thoughts} \quad
In the case of \textbf{Original-ToT}, a sampling method is employed to generate 3 questions from each answer node, and the self-evaluation method is utilized for reward calculation. Subsequently, breadth-first search will be used and 10 nodes from each step will be selected for later simulation. Additionally, the temperature of the LLM is configured to 0.7, consistent with the settings in original ToT paper. In the case of \textbf{Adapted-ToT}, we provide more heuristical hints in prompt to generate the questions, e.g. `you should try to propose the question to halve the probability set'. Likewise, each answer node generates 3 questions, and the LLM selects 10 nodes with higher self-evaluation rewards to further simulation. The simulation steps are also 3.

\textbf{Reflextion}
\quad
This approach involves the LLM agent suggesting questions iteratively until the question reward exceeds the threshold of 0.7 or reaches the maximum limit of 3 questions. The reward score $s$ is calculated using the formula $s = \min(p^A, p^N) / \max(p^A, p^N)$. This heuristic is based on the principle of whether the question can effectively halve the probability set. If a candidate question achieves a score above the threshold, the process of proposing questions is concluded, and that question is selected. In cases where no question meets the threshold, the one with the highest score is chosen.

\textbf{Uncertainty of Thoughts Pruned}
\quad
After generating the candidate question based on the possibility set $\Omega_i$, we sorted these question nodes by uncertainty based reward and reserved half of them, serving the purpose of pruning. In subsequent steps of the simulation, this pruning operation will be continued. Other settings were the same as UoT, described in Section \S\ref{Implementation}.

\subsection{Scenarios Settings and Datasets}
\label{Scenarios Settings and Datasets}

\textbf{20 Questions game} is a classic guessing game where the \textbf{\textit{answerer}} thinks of an object, person, place, or other, and the \textbf{\textit{questioner}}, possessing no prior knowledge about the chosen entity, proceeds to pose a series of up to 20 yes-or-no questions to determine what the secret item is. The questions are designed to narrow the possibilities and ultimately guess the secret item within the 20 questions.
\textbf{20 Questions in BIG-bench}:
It is the sub-task of BIG-bench and can be found on the GitHub website\footnote{\href{https://github.com/google/BIG-bench/tree/main/bigbench/benchmark\_tasks/twenty\_questions}{https://github.com/google/BIG-bench/tree/main/bigbench/benchmark\_tasks/twenty\_questions}}, consist of 29 items.
\textbf{Common Dataset Construction}:
\label{Common}
We came across an official website\footnote{\href{https://blog.prepscholar.com/20-questions-game}{https://blog.prepscholar.com/20-questions-game}} that introduces a 20 Questions game, which mentions that common target categories in this game include animals, places, food, and objects. Therefore, we extracted and manually screened the targets mentioned on this website, resulting in a dataset named "Common" comprising 111 targets, each belonging to one of the four aforementioned categories.
\textbf{Thing Dataset}: It is a collection of 1,854 varied object concepts, carefully selected from tangible and easily identifiable nouns in American English by Martin at al.\cite{hebart2019things}, which is publicly available on their official website\footnote{\href{https://osf.io/jum2f}{https://osf.io/jum2f}}.

\textbf{Medical Diagnosis} \quad In this scenario, the patient will simply describe their symptom first which we call a `Self-report', then doctor acted by LLM will start to ask questions to interact with patient to determine the disease.

\textbf{Troubleshooting} \quad In FloDial dataset, trouble includes faults of car and laptop. Similar to Medical Diagnosis, the customer first describes some simple fault symptoms, then the customer support technician will chat with customer to further check the specific issues of device.

\textbf{LLMs Serve as Questioner (Patient or Customer)}
\quad
In simulated interactions involving questioner and answerer scenarios, particularly for medical diagnosis and troubleshooting, the response given by an LLM acting answerer is guided by scenario instructions and real-world dialogue examples. This approach makes the responses of answerer more human-like and enhances its accuracy in diagnosing diseases or identifying faults. While, in the game of 20-question, where the objective is to guess common items, the LLM acting as the answerer only needs to provide simple `yes' or `no' answers. Therefore, incorporating real-world dialogue into the LLM's prompts for this game is not necessary.

\subsection{Dataset selection criteria and process for MedDG}
\label{Dataset selection criteria and process for MedDG}

In the original MedDG dataset, numerous conversations lacked a clear diagnosis, often concluding with advice for the patient to rest or seek further tests. This ambiguity arose from patients not detailing their symptoms sufficiently or doctors lacking the information or confidence to diagnose. Consequently, these conversations hinder LLMs from accurately understanding disease and symptom information for effective patient role simulation. To address this, we curated our final evaluation set to include only conversations with explicit disease diagnoses.

Furthermore, to ensure a balanced representation across the 8 disease categories, we selected roughly 40 dialogues for each disease. We also excluded conversations that were too brief (1-2 turns) or excessively lengthy (over 10 turns). The curation process involved two annotators: one for initial selection and another for verification.

Given these criteria, we finally pick 500 conversations for our evaluation set, aiming to maintain the evaluation's reliability and quality. We will also clarify this and add the details into the following version.

\subsection{Data Preprocessing of FloDial}
\label{Preprocessing of FloDial}
We process the dataset, FloDial, to convert troubleshooting flowcharts into a set of troubleshooting faults. The dialogue is grounded in specific faults, which correspond to the leaf nodes (descriptions and solutions of faults) in the flowcharts. After reviewing all the leaf nodes, we identify 153 faults that had corresponding dialogues. We then use GPT-4 to generate a clear name for each fault based on the descriptions and solutions of faults, and randomly selected one corresponding dialogue history to construct the current dataset.

\subsection{UoT (Open Set) Setup}
\label{UoT (Open Set) Setup}

To initialize the possibility set as the start of the algorithm, in medical diagnosis and troubleshooting, initial descriptions from patients or customers about symptoms or issues enable UoT to establish a possibility set right from the start. For the game of 20 Questions, where initial information is scant, prematurely establishing this set could misdirect the inquiry. Therefore, for the first three rounds, we employ Direct Prompting in Open-Set (DPOS) approach to gather information and feedback. After these initial rounds, UoT takes over, refreshing the possibility set each round to refine the questioning strategy.

For datasets Common, Things, DX, MedDG and FloDial, we configure the size of the possibility set for each update round, setting them at 10, 10, 5, 5, and 5, respectively. This parameter should prevent the increase in cognitive load and decrease in efficiency that comes with larger sizes, while also avoiding the limitations of focusing on a few specific items that come with smaller sizes. We experiments with values(size) between 5 and 50 based on this rationale, and the final selection of these hyperparameters is guided by empirical performance evaluations.

\subsection{Implementation}
\label{Implementation}

Empirically, we set the plan (simulation) steps as 3 and the number of questions during the simulation is 3. The hyperparameter $\lambda$ in uncertainty-based reward is 0.4.

\section{Impact Statement}
\label{Impact Statement}

This paper aims to enhance LLMs’ information seeking abilities, allowing them to make better decisions in challenging real-world settings involving uncertainty and ambiguity, and to interact more effectively with humans in human-AI settings. On the whole, we expect that information seeking capabilities should allow models to behave in a manner that is more reliable and well-aligned with human expectations, as it allows them to better resolve their uncertainty. There may be some potential societal consequences related to LLMs in general, such as privacy and trustworthiness
issues. However, we do not believe that any of these must be specifically highlighted here.

\section{Examples in Scenarios}
\label{examples in scenarios}

Some examples of conversation in different scenarios are provided in Table \ref{tab: example 20q}-\ref{tab: example trouble}.

\begin{table}[htb]
    \centering
    \caption{Example for 20 Question. {\bf Q} stands for Questioner and {\bf A} stands for Answerer.}
    \vspace{1mm}
    \begin{tabular}{p{0.7\linewidth}p{0.2\linewidth}}
    \hline
        {\bf Target:} Glass micropipette\\\hline
        {\bf Conversation} \\\hline
        1: {\bf Q:} Is X a tangible object? & {\bf A:} Yes.\\
        2: {\bf Q:} Is X a living organism? & {\bf A:} No.\\
        3: {\bf Q:} Is X a household item? & {\bf A:} No.\\
        4: {\bf Q:} Is X a type of transportation? & {\bf A:} No.\\
        5: {\bf Q:} Is X used in construction? & {\bf A:} No.\\
        6: {\bf Q:} Is X a wearable item? & {\bf A:} No.\\
        7: {\bf Q:} Is X a type of toy? & {\bf A:} No.\\
        8: {\bf Q:} Is X a 'Glass micropipette'? \\
        \multicolumn{2}{l}{\quad{\bf A:} You guessed it. X is a 'Glass micropipette'.}\\
    \hline
    \end{tabular}
    \label{tab: example 20q}
\end{table}

\begin{table}[htb]
    \centering
    \caption{Example for Medical Diagnosis. {\bf D} stands for Doctor (Questioner) and {\bf P} stands for Patient (P).}
    \vspace{1mm}
    \begin{tabular}{p{0.95\linewidth}}
    \hline
        {\bf Disease:} Allergic rhinitis\\\hline
        {\bf Self-report}\\\hline
        Hello doctor, my son's throat easily becomes red and swollen as soon as winter comes. He has been coughing and his throat has been red and swollen for the past half month.\\\hline
        {\bf Conversation} \\\hline
        1: {\bf D:} Do you experience a runny or stuffy nose?  {\bf P:} Yes.\\
        2: {\bf D:} You may have 'Allergic rhinitis'. \\
    \hline
    \end{tabular}
    \label{tab: example med}
\end{table}

\iffalse
\begin{table}[htb]
    \centering
    \caption{Example for MedDG. {\bf D} stands for Doctor (Questioner) and {\bf P} stands for Patient (Answerer).}
    \vspace{1mm}
    \begin{tabular}{p{0.95\linewidth}}
    \hline
        {\bf Disease:} Gastritis\\\hline
        {\bf Self-report}\\\hline
        Recently, I always feel nauseous when I wake up in the morning, but during the day, everything seems fine. (Female, 33 years old)\\\hline
        {\bf Conversation} \\\hline
        1: {\bf D:} Have you been experiencing vomiting or diarrhea recently? \\
        \quad{\bf P:} No, I haven't. I just feel nauseous and my stomach is uncomfortable.\\
        2: {\bf D:} Have you noticed any dark, tarry stools or vomit that looks like coffee grounds? \\
        \quad{\bf E:} No, I haven't noticed anything like that.\\
        3: {\bf D:} Are you experiencing pregnancy-related morning sickness? \\
        \quad{\bf P:} No, I'm not pregnant. It's just that my stomach feels uncomfortable and nauseous.\\
        4: {\bf D:} You may have 'Gastritis'. \\
        %\quad{\bf E:} You guessed it. Target is 'Gastritis'.\\
    \hline
    \end{tabular}
    \label{tab: example meddg}
\end{table}
\fi

\begin{table}[htb]
    \centering
    \caption{Example for MedDG. {\bf D} stands for Doctor (Questioner) and {\bf P} stands for Patient (Answerer).}
    \vspace{1mm}
    \begin{tabular}{p{0.95\linewidth}}
    \hline
        {\bf Disease:} Gastric ulcer\\\hline
        {\bf Self-report}\\\hline
        Stomachache. Pain in the upper abdominal area. No diarrhea. (Female, 27 years old)\\\hline
        {\bf Conversation} \\\hline
        1: {\bf D:} Do you experience a burning or gnawing sensation in your stomach, especially on an empty stomach? \\
        \quad{\bf P:} Yes, I do have this sensation, especially on an empty stomach.\\
        2: {\bf D:} Have you noticed any dark, tarry stools or vomit that looks like coffee grounds? \\
        \quad{\bf P:} I haven't noticed the color of my stool, but I haven't vomited.\\
        3: {\bf D:} You may have 'Gastric ulcer'. \\
        %\quad{\bf E:} You guessed it. Target is 'Gastric ulcer'.\\
    \hline
    \end{tabular}
    \label{tab: example meddg}
\end{table}

\begin{table}[htb]
    \centering
    \caption{Example for Troubleshooting. {\bf T} stands for customer support technician and {\bf C} stands for customer.}
    \vspace{1mm}
    \begin{tabular}{p{0.75\linewidth}p{0.15\linewidth}}
    \hline
        {\bf Fault:} Brake fluid overflow\\\hline
        {\bf Self-report}\\\hline
        \multicolumn{2}{p{0.95\linewidth}}{Lately, I have been having trouble with the brakes of my car. I have checked it again and again but can't figure out what's causing this. Please help me. }\\\hline
        {\bf Conversation} \\\hline
        2: {\bf T:} Does the issue affect the rotation or movement of the wheels? & {\bf C:} No.\\
        3: {\bf T:} Does the issue seem to be related to the overall braking system? & {\bf C:} Yes.\\
        4: {\bf T:} Are you experiencing a brake fluid leak? & {\bf C:} No.\\
        5: {\bf T:} Are your brake pads worn out? & {\bf C:} No.\\
        6: {\bf T:} Is the brake pedal feeling spongy or soft when you press it? & {\bf C:} Yes.\\
        7: {\bf T:} Is there an issue with the brake lines being air-filled or damaged? & {\bf C:} No.\\
        \multicolumn{2}{p{0.95\linewidth}}{8: {\bf T:} Your car may have a brake fluid overflow issue.}\\
    \hline
    \end{tabular}
    \label{tab: example trouble}
\end{table}

\section{Prompts}
\label{Prompts}
\subsection{DP(OS) and DP(CS) Prompt}
\label{uninformed prompt}

The specific prompts for the DP(OS) and DP(CS) baselines in each scenario are provided in Table \ref{tab: inform 20q}-\ref{tab: inform trouble}.

\begin{table}[htb]
    \centering
    \caption{DPOS and DPCS Prompt for 20 Question game. The two settings are the same except for the content noted in red, which is only for DPCS Prompt. The first time the content marked in red is mentioned at the beginning of interaction is to inform the questioner of the entire probability set. The second mention serves as a reminder for the questioner to determine the target as soon as possible when the conversation is nearing its end. We establish two phases with the aim of encouraging the questioner to guess the target as soon as possible when the second phase of interaction is nearing its conclusion, in order to avoid failure at the end of interaction.}
    \vspace{1mm}
    \begin{tabular}{p{0.9\linewidth}}
    \hline
        {\bf Prologue} \\\hline
        Let us play the game of 20 questions. I am  impersonating the thing, X. You will ask me up to 20 questions that start with 'Is X' and can only be answered by yes or no, and I will answer each one truthfully based on being X. \\
        Let us begin. Ask me the first question. Reply to me with your first question.\\\hline
        \textcolor{red}{The X may be one of the items below: [\underline{list of items}]} \\ \hline
        {\bf Phase 1} (Q1 - Q13)  \\\hline
        {\it None}\\\hline
        {\bf Phase 2} (Q14 - Q20) \\\hline
        \textcolor{red}{The X may be one of the items below: [\underline{list of items}]}\\
    \hline
    \end{tabular}
    
    \label{tab: inform 20q}
\end{table}

\begin{table}[htb]
    \centering
    \caption{DPOS and DPCS are medical diagnostic prompts with similar structures. The key difference is the inclusion of probability set information in red text for DPCS. This red text appears twice: initially to inform the questioner about all potential diagnoses and again towards the end of the interaction as a reminder to quickly confirm the disease. We establish two phases with the aim of encouraging the questioner to confirm the disease as soon as possible when the second phase of interaction is nearing its conclusion, avoiding failure at the end of interaction.}
    \vspace{1mm}
    \begin{tabular}{p{0.9\linewidth}}
    \hline
        {\bf Prologue} \\\hline
        You are a doctor and your patient reports that: \textcolor{blue}{\underline{[self-report]}} \\
        You should ask your patient questions in English with symptoms which can only be answered by yes or no, to find what disease this patient suffers. \\
        Let us begin. Ask me the first question.\\\hline

        Based on the symptoms above, if you find out the disease, please ask 'Are you a [disease name]?'\\
        \textcolor{red}{The patient may suffer from one of the diseases below: [\underline{list of diseases}]}\\ \hline
        
        {\bf Phase 1} (Q1)  \\\hline
        {\it None}\\\hline
        {\bf Phase 2} (Q3 - Q5) \\\hline
        Based on the symptoms above, if you find out the disease, please ask 'Are you a [disease name]?'\\
        \textcolor{red}{The patient may suffer from one of the diseases below: [\underline{list of diseases}]}\\
    \hline
    \end{tabular}
    
    \label{tab: inform med}
\end{table}

\begin{table}[htb]
    \centering
    \caption{DPOS and DPCS are troubleshooting prompts with similar structures, but DPCS includes unique content highlighted in red. This red content appears first at the beginning, outlining all potential faults, and again towards the end as a reminder to swiftly identify the fault. The two-phase structure of these prompts aims to ensure quick fault confirmation, especially in the final stages of the interaction, to prevent failure.}
    \vspace{1mm}
    \begin{tabular}{p{0.9\linewidth}}
    \hline
        {\bf Prologue} \\\hline
        You are a technician and your client reports that: \textcolor{blue}{\underline{[self-report]}} \\
        You should ask your client questions about a specific situation which can only be answered by yes or no, in order to find where the issue this client faces with located. \\
        Let us begin. Ask me the first question.\\\hline

        \textcolor{red}{The client may face one of the issues below: [\underline{list of issues}]}\\ \hline
        
        {\bf Phase 1} (Q1 - Q13)  \\\hline
        {\it None}\\\hline
        {\bf Phase 2} (Q14 - Q20) \\\hline
        Based on the situations above, if you find out the issue, please ask 'Are you a [issue name]?'\\
        \textcolor{red}{The client may face one of the issues below: [\underline{list of issues}]}\\
    \hline
    \end{tabular}
    
    \label{tab: inform trouble}
\end{table}

\subsection{Planning Prompt}
\label{planning prompt}

The specific prompts for Planning Prompt baselines in each scenario are provided in 
% Table~\ref{tab: plan 20q}, Table~\ref{tab: plan med} and Table~\ref{tab: plan trouble}. 
Table~\ref{tab: plan 20q}-\ref{tab: plan trouble}. 
As planning prompt method is close set setting, hence the probability set will also be informed in the prompt as DPCS prompt. We do not repeat it in the tables.

\begin{table}[htb]
    \centering
    \caption{Planning Prompt for 20 Question game. [\underline{C1}] is the count of questions asked and [\underline{C2}] is the count of questions remaining. The `information gained' marked blue represents the previous interaction history. We divide it into three phases to discuss the probability set as quickly as possible, conduct simulation for planning, and remind the questioner to guess the answer.}
    %\vspace{1mm}
    \begin{tabular}{p{0.9\linewidth}}
    \hline
        {\bf Prologue} \\\hline
        {\it Same as prompts in Appendix \ref{uninformed prompt}} \\\hline
        {\bf Phase 1} (Q1 - Q4)  \\\hline
        The next question should narrow down the possible range of X, preferably in half.\\\hline
        {\bf Phase 2} (Q5 - Q15) \\\hline
        We are playing the 20 Question game, \textcolor{blue}{[\underline{C1}]} questions have been asked. And now we know:\\ 
        \textcolor{blue}{[\underline{information gained}]} \\
        Based on the features of X above, please guess what X exactly is and tell me your top 3 most likely answers.\\
        For these three candidate X, please separately complete the remaining \textcolor{blue}{[\underline{C2}]} questions and answer yes/no by yourself. Notably, you must guess the corresponding X before the last question. \\\hline
        {\bf Phase 3} (Q16 - Q20) \\\hline
        Note that you should guess what X exactly is from now on. The question must start with 'Is X ...' \\
    \hline
    \end{tabular}
    
    \label{tab: plan 20q}
\end{table}

\begin{table}[htb]
    \centering
    \caption{Planning Prompt for Medical Diagnosis. [\underline{C1}] is the count of questions asked and [\underline{C2}] is the count of questions remaining. The `information gained' marked blue represents the previous interaction history. We divide it into three phases to discuss the probability set as quickly as possible, conduct simulation for planning, and remind the questioner to confirm the disease.}
    
    \begin{tabular}{p{0.9\linewidth}}
    \hline
        {\bf Prologue} \\\hline
        {\it Same as prompts in Appendix \ref{uninformed prompt}} \\\hline
        {\bf Phase 1}   \\\hline
        {\it Skip because of the limited QA rounds in this scenario}\\\hline
        {\bf Phase 2} (Q1 - Q3) \\\hline
        You are the doctor asking questions to diagnose, \textcolor{blue}{[\underline{C1}]} questions have been asked. And now we know about the patient: \\
        \textcolor{blue}{[\underline{information gained}]} \\
        Based on the symptoms of the patient above, please think about what disease the patient suffers from and tell me your top three most likely answers.\\
        For these three candidate diseases, please separately complete the remaining \textcolor{blue}{[\underline{C2}]} questions and answer yes/no by yourself. Notably, you must determine the corresponding disease before the last question.\\\hline
        {\bf Phase 3} (Q4 - Q5) \\\hline
        Note that you should determine what disease the patient suffers from now. The question must start with 'Are you a [disease name]?' \\
    \hline
    \end{tabular}
    \label{tab: plan med}
\end{table}

\begin{table}[htb]
    \centering
    \caption{Planning Prompt for Troubleshooting. [\underline{C1}] is the count of questions asked and [\underline{C2}] is the count of questions remaining. The `information gained' marked blue represents the previous interaction history. We divide it into three phases to discuss the probability set as quickly as possible, conduct simulation for planning, and remind the questioner to confirm the fault.}
    \vspace{1mm}
    \begin{tabular}{p{0.9\linewidth}}
    \hline
        {\bf Prologue} \\\hline
        {\it Same as prompts in Appendix \ref{uninformed prompt}} \\\hline
        {\bf Phase 1} (Q1 - Q4)  \\\hline
        The next question should narrow down the possible range of trouble issues, preferably in half\\\hline
        {\bf Phase 2} (Q5 - Q15) \\\hline
        You are a technician to troubleshoot, \textcolor{blue}{[\underline{C1}]} questions have been asked. And now we know:\\ 
        \textcolor{blue}{[\underline{information gained}]} \\
        Based on the situation your client faces, please think about what the issue exactly is and tell me your top 3 most likely answers.\\
        For these three candidate issues, please separately complete the remaining \textcolor{blue}{[\underline{C2}]} questions and answer yes/no by yourself. Notably, you must determine the corresponding issue before the last question. \\\hline
        {\bf Phase 3} (Q16 - Q20) \\\hline
        Note that you should determine what issue your client faces from now on. The question must start with 'Are you a [issue name]?' \\
    \hline
    \end{tabular}
    \label{tab: plan trouble}
\end{table}

\subsection{UoT Prompt}
\label{UoT Prompt}

The detailed prompts for our UoT method in each scenario are attached in 
% Table~\ref{tab: UoT 20q}, Table~\ref{tab: UoT med} and Table~\ref{tab: UoT troubleshooting}. 
Table~\ref{tab: UoT 20q}-\ref{tab: UoT troubleshooting}.

\begin{table}[htb]
    \centering
    \caption{UoT Prompt for the 20 Questions Game: As it is based on a closed-set setting, information about probabilities will be given at the beginning of the interaction and will be reminded after Q14. Since it is similar to previous prompts, we will not repeat it here. In the 'Prompt for Question Generation and Simulation', the count of YES/NO indicates the number of items that are consistent with the affirmative/negative response.}
    \vspace{1mm}
    \begin{tabular}{p{0.9\linewidth}}
    \hline
        {\bf Prologue} \\\hline
        Let us play the game of 20 questions. I am  impersonating the thing, X. You will ask me up to 20 questions that start with 'Is X' and can only be answered by yes or no, and I will answer each one truthfully based on being X. \\
        Let us begin. Ask me the first question. Reply to me with your first question.\\\hline
        \textbf{Prompt for Question Generation and Simulation} \\ \hline
        Please design a question about X and can only be answered by YES or NO. {asked} Then classify the possible X above based on this question. If the answer is 'YES', put this X into 'YES: ...', otherwise to 'NO: ...'. Finally calculate how many X in YES and NO.\\
        \\
        Notably, this question should fulfill that the count of YES and NO are almost the same with a permissible discrepancy of no more than one!\\
        \\
        You should think about best {n} questions to respond to. And your answer should be:\\
        Question 1: Is X ...?\\
        YES: item1, item2, ...\\
        Count of YES: ...\\
        NO: item1, item2, ...\\
        Count of NO: ...\\
        \hline
        
        {\bf Additional Reminder in Q14 - Q20} \\\hline
    
        Note that you should guess and ask what X exactly is from now on. X is possible a: [item\_list\_str], or other. The question must start with 'Is X ...\\
    \hline
    \end{tabular}
    
    \label{tab: UoT 20q}
\end{table}

\begin{table}[htb]
    \centering
    \caption{UoT Prompt for medical diagnosis: As it is based on a closed-set setting, information about probabilities will be given at the beginning of the interaction and will be reminded after Q3. Since it is similar to previous prompts, we will not repeat it here. In the 'Prompt for Question Generation and Simulation', the count of YES/NO indicates the number of diseases that are consistent with the affirmative/negative response.}
    \vspace{1mm}
    \begin{tabular}{p{0.9\linewidth}}
    \hline
        {\bf Prologue} \\\hline
        You are a doctor and your patient reports that: \textcolor{blue}{\underline{[self-report]}} \\
        You should ask your patient questions in English with symptoms which can only be answered by yes or no, to find what disease this patient suffers. \\
        Let us begin. Ask me the first question.\\\hline
    \textbf{Prompt for Question Generation and Simulation} \\ \hline
        Please design a question to ask your patient with symptoms about disease and can only be answered by YES or NO. Then classify the possible disease above based on each question. If the answer is 'YES', put this disease into 'YES: ...', otherwise to 'NO: ...'. Finally calculate how many X in YES and NO. \\
        \\
        Notably, this question should fulfill that the count of YES and NO are almost the same with a permissible discrepancy of no more than one! \\
        \\
        You should think about best {n} questions to respond to. \\And your answer should be: \\
        Question 1: ...? \\
        YES: disease1, disease2, ... (disease names only) \\
        Count of YES: ... \\
        NO: disease1, disease2, ... (disease names only) \\
        Count of NO: ... \\ \hline
        
        {\bf Additional Reminder in Q3 - Q5} \\\hline
        Note that you should point out and ask what disease the patient suffers from now. The patient may suffer from one of diseases below: [list of disease], or other.
        The question must be 'You may have a [disease name]?'
        \\
    
    \hline
    \end{tabular}
    
    \label{tab: UoT med}
\end{table}

\begin{table}[htb]
    \centering
    \caption{UoT Prompt for troubleshooting: As it is based on a closed-set setting, information about probabilities will be given at the beginning of the interaction and will be reminded after Q14. Since it is similar to previous prompts, we will not repeat it here. In the 'Prompt for Question Generation and Simulation', the count of YES/NO indicates the number of faults which are consistent with the affirmative/negative response.}
    \vspace{1mm}
    \begin{tabular}{p{0.9\linewidth}}
    \hline
        {\bf Prologue} \\\hline
        You are a technician and your client reports that: \textcolor{blue}{\underline{[self-report]}} \\
        You should ask your client questions about a specific situation which can only be answered by yes or no, in order to find where the issue this client faces with located. \\
        Let us begin. Ask me the first question.\\
    \hline
    \textbf{Prompt for Question Generation and Simulation} \\ \hline
    Please design a question to ask your client with specific situation and can only be answered by YES or NO. {asked} Then classify the possible issue above based on this question. If the answer is 'YES', put this issue into 'YES: ...', otherwise to 'NO: ...'. Finally calculate how many issues in YES and NO. \\
    \\
    Notably, this question should fulfill that the count of YES and NO are almost the same with a permissible discrepancy of no more than one! \\
    \\
    You should think about best {n} questions to respond to. And your answer should be: \\
    Question 1: ...? \\
    YES: issue1, issue2, ... (issue names only) \\
    Count of YES: ... \\
    NO: issue1, issue2, ... (issue names only) \\
    Count of NO: ... \\
    \hline

    {\bf Additional Reminder in Q14 - Q20} \\\hline
    Note that you should point out and ask what disease the patient suffers from now. The patient may suffer from one of diseases below: [list of disease], or other.
    The question must be 'You may have a [disease name]?' \\
    
    \hline
    \end{tabular}
    
    \label{tab: UoT troubleshooting}
\end{table}

\subsection{Questioner Prompt}
\label{examiner prompt}

The specific prompt to guide the examiner in all scenarios is provided in 
% Table~\ref{tab: examiner 20 q}, Table~\ref{tab: examiner md} and Table~\ref{tab: examiner td}.
Table~\ref{tab: examiner 20 q}-\ref{tab: examiner td}.

\begin{table}[htb]
    \centering
    %\vspace{-3mm}
    \caption{Prompt for Answerer in 20 questions game. [\underline{target}] is the name of the final target for each dialogue.}
    \vspace{1mm}
    \begin{tabular}{p{0.9\linewidth}}
    \hline
        {\bf Prologue} \\\hline
        Let us play the game of 20 questions. You are the answerer and I am the guesser. X is '\textcolor{blue}{[\underline{target}]}'. I will ask you up to 20 questions and you should answer each one truthfully based on being X. If I guess correctly what is X, answer me "You guessed it. X is '\textcolor{blue}{[\underline{target}]}'." Note that never directly tell me what X is all the time. \\
        Let us begin. Here is my first question. \\
    \hline
    \end{tabular}
    %\vspace{-3mm}
    \label{tab: examiner 20 q}
\end{table}

\begin{table}[htb]
    \centering
    %\vspace{-3mm}
    \caption{Prompt for Answerer in medical diagnosis. [\underline{disease}] is the name of the final disease for each dialogue.}
    \vspace{1mm}
    \begin{tabular}{p{0.9\linewidth}}
    \hline
        {\bf Prologue} \\\hline
        You are the patient suffering '\textcolor{blue}{[\underline{target}]}' and I am the doctor. I will ask you up to 5 questions and you should answer each one truthfully based on your disease. If I point out correctly what disease you experience, answer me "You are right. I am experiencing '\textcolor{blue}{[\underline{target}]}'." Note that never directly tell me what disease is all the time. \\
        Let us begin. Here is my first question. \\
    \hline
    \end{tabular}
    %\vspace{-3mm}
    \label{tab: examiner md}
\end{table}

\begin{table}[htb]
    \centering
    %\vspace{-3mm}
    \caption{Answerer Prompt in troubleshooting. [\underline{fault}] is the name of the final fault for each dialogue.}
    \vspace{1mm}
    \begin{tabular}{p{0.9\linewidth}}
    \hline
        {\bf Prologue} \\\hline
        You are the client with a device that has '\textcolor{blue}{[\underline{target}]}' and I am the technician. I will ask you up to 20 questions and you should answer each one truthfully based on the issue of your device. If I point out correctly what your issue is, answer me "You are right. My device has '\textcolor{blue}{[\underline{target}]}'." Note that never directly tell me what the issue is all the time. \\
        Let us begin. Here is my first question. \\
    \hline
    \end{tabular}
    %\vspace{-3mm}
    \label{tab: examiner td}
\end{table}

\clearpage

\newpage

\begin{table}[htb]
    \centering
    \caption{ToT Prompt for 20 Question game. [\underline{C1}] is the count of questions asked. The `information gained' marked blue represents the previous interaction history.}
    %\vspace{1mm}
    \begin{tabular}{p{0.9\linewidth}}
    \hline
        {\bf Standard Prompt} \\\hline
        You are playing the game of 20 questions. I am  impersonating the thing, X. You will ask me up to 20 questions that start with 'Is X' and can only be answered by yes or no, and I will answer each one truthfully based on being X. \\
        \textcolor{blue}{[\underline{C1}]} questions have been asked. And now we know:\\ 
        \textcolor{blue}{[\underline{information gained}]} \\
        Design a question about X and can only be answer by YES or NO.\\\hline
        {\bf Additional Reminder in Q14 - Q20} \\\hline
        {\it Same as prompts in Appendix \ref{UoT Prompt}}\\
    \hline
    \end{tabular}
    
    \label{tab: plan 20q}
\end{table}

\begin{table}[htb]
    \centering
    \caption{ToT Prompt for Medical Diagnosis. [\underline{C1}] is the count of questions asked. The `information gained' marked blue represents the previous interaction history.}
    
    \begin{tabular}{p{0.9\linewidth}}
    \hline
        {\bf Standard Prompt} \\\hline
        You are a doctor and your patient reports that: \textcolor{blue}{\underline{[self-report]}} \\
        You should ask your patient questions in English with symptoms which can only be answered by yes or no, to find what disease this patient suffers. \\
        \textcolor{blue}{[\underline{C1}]} questions have been asked. And now we know:\\ 
        \textcolor{blue}{[\underline{information gained}]} \\
        Design a question to ask your patient with symptoms about disease and can only be answered by YES or NO.\\\hline
        {\bf Additional Reminder in Q14 - Q20} \\\hline
        {\it Same as prompts in Appendix \ref{UoT Prompt}}\\
    \hline
    \end{tabular}
    \label{tab: plan med}
\end{table}

\begin{table}[htb]
    \centering
    \caption{ToT prompt for Troubleshooting. [\underline{C1}] is the count of questions asked. The `information gained' marked blue represents the previous interaction history.}
    % \vspace{1mm}
    \begin{tabular}{p{0.9\linewidth}}
    \hline
        {\bf Standard Prompt} \\\hline
        You are a technician and your client reports that: \textcolor{blue}{\underline{[self-report]}} \\
        You should ask your client questions about a specific situation which can only be answered by yes or no, in order to find where the issue this client faces with located. \\
        \textcolor{blue}{[\underline{C1}]} questions have been asked. And now we know:\\ 
        \textcolor{blue}{[\underline{information gained}]} \\
        Design a question to ask your client with specific situation and can only be answered by YES or NO.\\\hline
        {\bf Additional Reminder in Q14 - Q20}\\\hline
        {\it Same as prompts in Appendix \ref{UoT Prompt}} \\
    \hline
    \end{tabular}
    \label{tab: plan trouble}
\end{table}

\begin{table}[htb]
    \centering
    \caption{CoT Prompts.}
    %\vspace{1mm}
    \begin{tabular}{p{0.9\linewidth}}
    \hline
        {\bf Prologue} \\\hline
        {\it Same as prompts in Appendix \ref{uninformed prompt}} \\\hline
        {\bf Prompt for Generating Question and Explanation} \\\hline
        What's your next question? Let's think step-by-step and reply me with your explanation.\\
        Your answer should be:\\
        Explanation: [insert step-by-step analysis here]\\
        Question: [next question]\\\hline
        {\bf Additional Reminder in Q14 - Q20}\\\hline
        {\it Same as prompts in Appendix \ref{UoT Prompt}} \\
    \hline
    \end{tabular}
    
    \label{tab: plan 20q}
\end{table}

% \newpage

% \input{Checklist}

\end{document}